\documentclass{article} 
\usepackage{iclr2027_conference,times}

\usepackage{amsmath,amsfonts,bm}

\def\eqref#1{equation~\ref{#1}}

\def\1{\bm{1}}

\DeclareMathAlphabet{\mathsfit}{\encodingdefault}{\sfdefault}{m}{sl}
\SetMathAlphabet{\mathsfit}{bold}{\encodingdefault}{\sfdefault}{bx}{n}

\usepackage{hyperref}       
\usepackage{booktabs}       
\usepackage{amsfonts}       
\usepackage{nicefrac}       
\usepackage{microtype}      
\usepackage{xcolor}         
\usepackage{wrapfig}
\usepackage{bm}
\usepackage{amsmath}
\usepackage{amssymb}
\usepackage{mathtools}
\usepackage{amsthm}
\usepackage{float}
\usepackage{wrapfig}
\usepackage{makecell}
\usepackage{amsthm}
\usepackage{comment}
\usepackage{refcount}
\usepackage{graphicx}
\usepackage{multirow}
\usepackage{amsmath, amssymb}
\usepackage[utf8]{inputenc} 
\usepackage[T1]{fontenc}    
\usepackage{url}            
\usepackage{booktabs}       
\usepackage{amsfonts}       
\usepackage{nicefrac}       
\usepackage{microtype}      
\usepackage{xcolor}         
\usepackage{bbm}
\usepackage{subcaption}

\newcolumntype{?}{!{\vrule width 1pt}}
\newtheorem{theorem}{Theorem}[section]

\newtheorem{lemma}[theorem]{Lemma}

\usepackage[textsize=tiny]{todonotes}

\title{Anti-coherence Regularized Absolute Scaling for Out-of-Distribution Detection}

\author{Donghwan Kim \\
  Department of Industrial Engineering\\
  Yonsei University\\
  Seoul, South Korea \\
  \texttt{cain0709@yonsei.ac.kr} \\
  \And
  Youngwoong Jin \\
  Department of Intelligent Data and Optimization\\
  Yonsei University\\
  Seoul, South Korea \\
  \texttt{duddnd1113@yonsei.ac.kr} \\
    \And
  Hyunsoo Yoon$^\dagger$ \\
  Department of Industrial Engineering\\
  Yonsei University\\
  Seoul, South Korea \\
  \texttt{hs.yoon@yonsei.ac.kr} \\
}

\iclrfinalcopy 
\begin{document}

\maketitle

\begin{abstract}
Out-of-distribution (OOD) detection is recognized as a critical problem in AI safety. SCALE is an effective post-hoc OOD detection method in the activation-shaping family that requires no training data and rescales penultimate activations using a percentile-based magnitude ratio. However, this ratio is ill-defined when negative activations are present, since positive and negative contributions cancel within the underlying sum, causing SCALE to degenerate on backbones such as ViT. We address this with an \textbf{Absolute Fix} that computes the ratio over absolute-valued activations rather than discarding negative activations through ReLU. Building on this corrected scale, we propose \textbf{Anti-coherence} regularization, which penalizes the scaling factor by the fraction of a sample's max-logit contributions to the max logit that are sign-mismatched and thereby suppresses unreliable evidence for the predicted class. Despite requiring \textbf{no access to training data or training labels}, our method \textbf{ties or outperforms} activation-shaping methods that rely on training data in terms of AUROC and FPR@95 on CIFAR-10 and ImageNet under the OpenOOD v1.5 protocol.
\end{abstract}

\section{Introduction}
\label{sec:intro}
A fundamental limitation of deep neural networks stems from the closed-world assumption under which they are trained: once deployed, these models frequently encounter inputs that deviate from the training distribution while continuing to produce confident predictions despite this mismatch. Such failures are consequential in safety-critical domains such as robotics \citep{isaku2025out}, cyber security \citep{corsini2023existing}, and medical imaging \citep{hong2024out}, where an undetected distribution shift can lead to severe real-world harm. Out-of-distribution (OOD) detection has therefore emerged as a key safeguard that flags unfamiliar inputs before they translate into erroneous decisions, supporting the broader goal of building reliable and trustworthy AI \citep{yang2024generalized}.

Existing OOD detection methods can be broadly divided into two categories. The first category consists of training-based methods incorporating synthetic or real OOD data or auxiliary objectives into training to encourage better separation between in-distribution (ID) and OOD samples \citep{ming2023how, du2022vos, zhu2023diversified, tao2023nonparametric}. However, practical access to OOD datasets is often restricted and the additional training adds computational cost that renders these methods impractical for real-world deployment \citep{yang2024generalized}. The second category consists of post-hoc methods, which operate on a frozen pretrained feature extractor without updating its weights. While some rely on statistics computed from training data or additional gradient computation, none require retraining the model or auxiliary OOD datasets.

Within post-hoc approaches, methods can be further categorized according to what they utilize to compute the OOD score. Logit-based methods \citep{hendrycks2017a,hendrycks2022scaling} utilize the values of the output logits to compute the OOD score, representation-based methods \citep{ren2021simple,lee2018simple,sun2022out} leverage statistics such as distance estimated from penultimate layer activations, and activation-shaping methods \citep{sun2021react, xu2023vra} manipulate the penultimate layer activations themselves prior to scoring. Among these, activation shaping has emerged as a promising direction, as it often requires no ID statistics and is implemented through simple operations \citep{regmi2026adascale}, such as scaling, rectifying, or pruning (e.g., \citet{sun2021react, djurisic2023extremely, xu2024scaling}), which are then combined with another post-hoc scoring function such as Energy \citep{liu2020energy} to compute the final OOD score.

The activation-shaping method SCALE~\citep{xu2024scaling} defines a scaling factor as the ratio between the sum of all penultimate activations and the sum of those exceeding a given percentile and applies this factor uniformly across the entire penultimate layer to achieve strong OOD detection performance. However, this percentile-based scaling has been reported to suffer substantial performance degeneration on architectures with negative penultimate activations such as ViT \citep{dosovitskiy2021an} and Swin Transformer \citep{liu2021swin}, whereas it remains stable on architectures such as ResNet \citep{he2016deep} whose penultimate layer follows a ReLU and therefore contains no negative activations \citep{regmi2026adascale}. To address this architecture-dependent limitation, \citet{djurisic2025logit} explicitly apply ReLU to the penultimate activations before computing the scaling factor and show that removing negative activations in this way improves performance. Yet this approach discards all information carried by negative activations, and thus a limitation remains in that it fails to fully reflect the entire penultimate layer when computing the scaling factor. Since this leaves room for improvement, we propose the \textbf{Absolute Fix}, which applies the absolute value function to the penultimate activations so that negative activations are also reflected in the scaling factor and yields more consistent performance gains than the ReLU-based rectification.

Building on this corrected scale, we decompose each sample's contribution to the max-logit and use the fraction of this contribution that drives the max-logit downward, which we term \textbf{Anti-coherence}, to regularize the scaling factor. In real-world deployment, training data is often inaccessible due to privacy constraints \citep{NEURIPS2025_ee4432ad}, and our method requires no access to it. Despite this, it \textbf{ties or outperforms} activation-shaping methods that rely on training data or additional gradient computation under the OpenOOD v1.5 \citep{zhang2024openood} protocol on CIFAR-10 \citep{krizhevsky2009learning} and ImageNet \citep{deng2009imagenet} in-distribution settings. In particular, our method achieves consistent performance across a diverse set of backbones on ImageNet.

In summary, our contributions are threefold.

\begin{itemize}
    \item We propose the \textbf{Absolute Fix}, which computes SCALE's scaling factor over absolute-valued penultimate activations rather than through ReLU-based rectification, thereby preserving the magnitude of negative activations and yielding consistent performance gains across architectures with negative penultimate activations.
    \item We introduce \textbf{Anti-coherence}, a regularization term derived from decomposing each sample's contribution to the maximum logit, which regularizes the scaling factor according to the fraction of this contribution that drives the maximum logit downward.
    \item We validate our method under the OpenOOD v1.5 protocol on CIFAR-10 and ImageNet, demonstrating that it ties or outperforms activation shaping methods that rely on training data or additional gradient computation, despite requiring neither.
\end{itemize}

\section{Related Works}
\label{sec:related_works}
\textbf{Activation Shaping Methods} operate by directly manipulating penultimate layer activations. ReAct \citep{sun2021react} truncates activations above a percentile threshold determined from training data, addressing the overconfidence exhibited by OOD samples. VRA \citep{xu2023vra} further truncates a lower bound, thereby widening the separation between ID and OOD activations. DICE \citep{sun2022dice} instead truncates the weights connected to these activations, while ASH \citep{djurisic2023extremely} combines truncation with activation scaling to improve detection performance. \citet{xu2024scaling} subsequently show that truncation degrades performance and achieve improved detection through scaling alone. OptFS \citep{zhao2024towards} estimates an optimal feature scaling vector from training data, and BATS \citep{zhu2022boosting} and LAPS \citep{he2024exploring} instead define a typical set from the activation statistics of training samples and rectify test-time activations according to this typical set. Catalyst \citep{Hassan_2026_CVPR} and DAVIS \citep{hassan2026davis} observe that channel-wise statistics of the pre-pooling activation map, such as mean, standard deviation (or variance), and the dominant (maximum) activation, are discarded during global average pooling (GAP), and exploit these statistics to compute a scaling factor or enrich the feature vector. While DAVIS extends to hierarchical architectures with a GAP stage such as Swin Transformer and ConvNeXt, both methods presuppose a spatial pre-pooling activation map and are therefore not directly applicable to architectures such as ViT that lack an explicit GAP operation.

\textbf{Representation-based Methods} primarily exploit the geometric structure of the penultimate layer's feature vector, computing a distance metric for the OOD score. \citet{lee2018simple} propose the class-wise Mahalanobis distance, fitting class-conditional Gaussians from training data statistics of the penultimate layer. Subsequent work has refined this distance by incorporating additional global information or $\ell_2$ feature normalization \citep{ren2021simple, muller2025mahalanobis++, kim2026mahavar}. Other approaches instead measure distance non-parametrically, using the $k$-nearest-neighbor distance to training feature vectors \citep{sun2022out} or fusing such a distance with a logit-based score \citep{park2023nearest}. \citet{liu2024fast} use the distance between a test sample and the decision boundary as the OOD score, while \citet{liu2025detecting} exploit the Neural Collapse phenomenon \citep{papyan2020prevalence} observed in the penultimate layer of pretrained backbones. Additionally, \citet{wang2022vim} derive a virtual OOD-class logit from the residual of penultimate features outside the principal subspace of training features and score samples by the softmax probability assigned to it. CADRef \citep{ling2025cadref} partitions feature components by whether their weighted contribution to the max logit aligns in sign with the classifier weight, and computes the deviation from class-conditional training averages within each partition, normalizing the result by a logit-based score. All of these methods, however, require access to the training dataset, a requirement our method does not share.

\textbf{Logit-based Methods} derive the OOD score directly from the logit values. \citet{hendrycks2017a} propose the maximum softmax probability as a baseline OOD score. \citet{liu2020energy}, motivated by energy-based models from statistical physics, propose an energy score derived from the logits. \citet{Liu_2023_CVPR} introduce a generalized entropy score that captures small variations in the softmax probabilities overlooked by standard entropy. \citet{djurisic2025logit} scale the logits themselves rather than the penultimate activations and compute the scaling factor from ReLU-rectified activations to further improve performance.

\section{Absolute Fix: Handling Negative Activations in $Q/Q_p$}
\label{sec:abs}

\subsection{Preliminaries}
\label{subsec:preliminaries}
Let $\phi: \mathcal{X} \to \mathbb{R}^d$ denote the feature extractor that maps an input to its $d$-dimensional penultimate layer activation, where the input space $\mathcal{X} = \mathcal{X}_{\text{in}} \cup \mathcal{X}_{\text{OOD}}$ is partitioned into disjoint ID and OOD subsets $\mathcal{X}_{\text{in}}$ and $\mathcal{X}_{\text{OOD}}$, with $\mathcal{X}_{\text{in}} \cap \mathcal{X}_{\text{OOD}} = \emptyset$. We assume that $\phi(\cdot)$ is pretrained on samples drawn from $\mathcal{X}_{\text{in}}$ for a downstream task such as classification (e.g., pretrained on ImageNet). Let $\textbf{z} = \phi(\mathbf{x}) \in \mathbb{R}^d$ denote the penultimate activation, with components $z_1, \dots, z_d$. SCALE \citep{xu2024scaling} computes a scaling factor $r$ from the ratio between the sum of all components of $\textbf{z}$ and the sum of components exceeding the $p^{th}$ percentile of $\textbf{z}$, given by
\begin{align}
r = \frac{Q}{Q_p}, \quad \text{where} \quad Q = \sum_{i=1}^{d} z_i, \quad Q_p = \sum_{i \,:\, z_i > P_p(\textbf{z})} z_i,
\end{align}
and $P_p(\textbf{z})$ denotes the $p^{th}$ percentile value among the $d$ components of $\textbf{z}$. Intuitively, $r$ decreases when activation mass concentrates in a small number of large components and increases when it spreads across many, and \citet{xu2024scaling} report that $r$ is on average larger for ID samples than for OOD samples. This can be interpreted through \citet{sun2021react}, who observe that OOD samples tend to trigger disproportionately high activations at a small number of units. The activation is then rescaled by $\exp(r)$ and passed through the classifier's fully connected layer to obtain the logit $\boldsymbol{\ell}'$, from which the final OOD score $S(\mathbf{x})$ is computed using the energy score\footnote{The scoring function used here is not restricted to Energy, but we adopt it as it is conventionally used with activation shaping methods.} \citep{liu2020energy},
\begin{align}
\boldsymbol{\ell}' = \mathbf{W}\big(\exp(r) \cdot \textbf{z}\big) + \textbf{b}, \qquad S(\mathbf{x}) = T \cdot \log \sum_{k=1}^{C} \exp({\ell'_k/T}),
\end{align}
where $\mathbf{W} \in \mathbb{R}^{C \times d}$ and $\textbf{b} \in \mathbb{R}^C$ are the weight and bias of the fully connected layer for $C$ classes, and $\ell'_k$ is the $k$-th component of $\boldsymbol{\ell}'$. A test sample is classified as in-distribution when $S(\mathbf{x})$ exceeds a threshold $\tau$, and as OOD otherwise. Following standard practice, $\tau$ is fixed so that the true positive rate (TPR) on ID samples equals 95\%. Under this threshold, the false positive rate, denoted FPR@95, measures the proportion of OOD samples incorrectly classified as ID.

\begin{figure}[t]
\centering
\includegraphics[width=\textwidth]{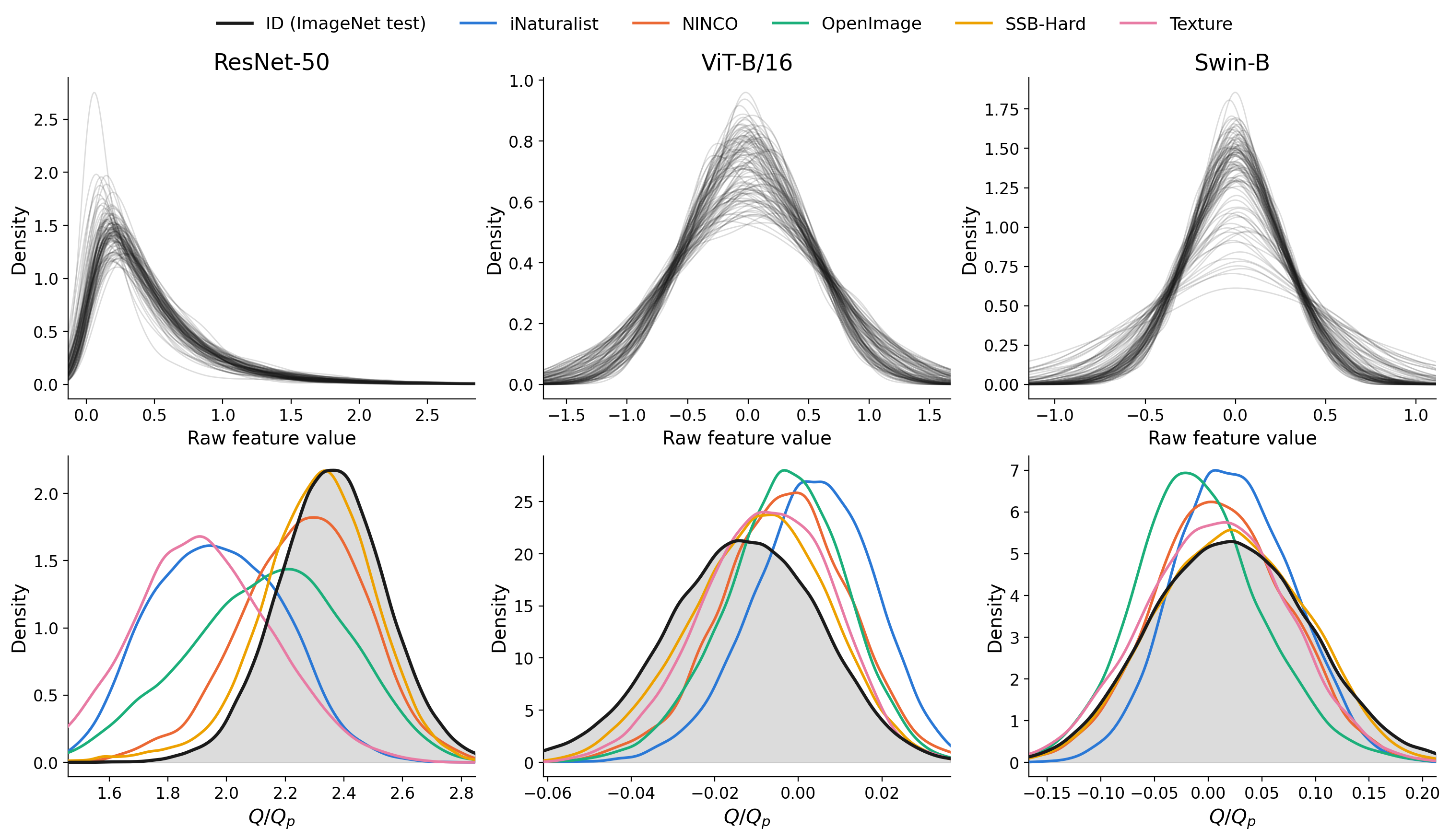}
\caption{Top: per-sample density of raw penultimate activation values for ID (ImageNet test) samples on ResNet-50, ViT-B/16 and Swin-B. ResNet-50, whose penultimate layer follows a ReLU, exhibits only non-negative activations whereas ViT-B/16 and Swin-B exhibit substantial negative activation mass. Bottom: distribution of $Q/Q_p$ for ID and OOD datasets on the same backbones. On ResNet-50, ID samples exhibit a visibly higher ratio than OOD samples, while on ViT-B/16 and Swin-B negative activations cause $Q$ and $Q_p$ to lose their intended meaning and this pattern no longer holds.}
\label{fig:raw_activation_main}
\end{figure}

\subsection{Absolute-Value Percentile Scaling}
\label{subsec:abs}
We now revisit $r$ more closely. Its computation relies on the $p^{th}$ percentile threshold $P_p(\mathbf{z})$ which is intended to isolate the components with the largest activation magnitude. When the penultimate activation is non-negative as in architectures such as ResNet where the penultimate layer follows a ReLU, magnitude coincides with value and $P_p(\mathbf{z})$ identifies these components as designed. However, for architectures such as ViT and Swin Transformer where the penultimate activation contains negative components, magnitude and value no longer coincide and summing over signed components allows positive and negative contributions to cancel. Consequently, $Q$ and $Q_p$ no longer reflect the intended notion of activation magnitude and the scaling factor $r$ deviates from its intended behavior. Following the OpenOOD v1.5 setting we set the ImageNet test split as ID and examine the distributions of $r$ for five OOD datasets in Figure~\ref{fig:raw_activation_main}. On ResNet, which always yields non-negative activations, $r$ is clearly larger for ID than for OOD. However, on ViT and Swin, whose activations are zero-centered by LayerNorm and therefore contain negative values, the empirical $r$ no longer separates ID from OOD and can even be larger for OOD.

This motivates the following observation, illustrated in Figure~\ref{fig:activation_logscale}. The figure shows the distribution of penultimate activations for ID and OOD samples on backbones with negative activations, where OOD exhibits a heavier tail than ID. This extends the single-sided heavy tail observed on ReLU-based backbones such as ResNet, where activations are strictly positive, into a tail that is heavy symmetrically on both sides of zero. To reflect this symmetric distribution when computing $r$, we propose the \textbf{Absolute Fix}, a simple improvement over the original formulation of $r$. This method computes $r$ using the absolute value of the penultimate activation at every step, denoted by the corrected ratio $r_{\text{abs}}$ and formulated as
\begin{align}
r_{\text{abs}} = \frac{Q_\text{abs}}{Q_{p,\text{abs}}}, \quad \text{where} \quad Q_{\text{abs}} = \sum_{i=1}^{d} |z_i|, \quad Q_{p, \text{abs}} = \sum_{i \,:\, |z_i| > P_{p,\text{abs}}(\mathbf{z})} |z_i|,
\end{align}
where $P_{p,\text{abs}}(\mathbf{z})$ denotes the $p^{th}$ percentile of $|\mathbf{z}|$. Note that when $\mathbf{z}$ is strictly non-negative, as in architectures such as ResNet and MobileNetV2, $r_{\text{abs}}$ reduces to $r$.

We analyze the behavior of $r$ and $r_{\text{abs}}$ across the six backbones, with full detail deferred to Appendix~\ref{appendix:absolute_fix}. On ViT-B/16, Swin-B, and ConvNeXt-B, $r_{\text{abs}}$ consistently separates ID from OOD in the intended direction. Despite lacking the symmetric heavy tail observed on these LayerNorm-based backbones due to its lower-bounded SiLU activation, EfficientNet-B1 still benefits from the Absolute Fix. On DeiT3-B the separation between ID and OOD visually worsens, while on BEiTv2-B $r_{\text{abs}}$ improves in the intended direction yet ID and OOD nearly coincide. Nonetheless both backbones show an improvement in OOD detection performance over the original SCALE and the ReLU-based fix. We provide a detailed discussion in Section~\ref{sec:exp} and Appendix~\ref{appendix:absolute_fix}.


\begin{figure}[t]
\centering
\includegraphics[width=\textwidth]{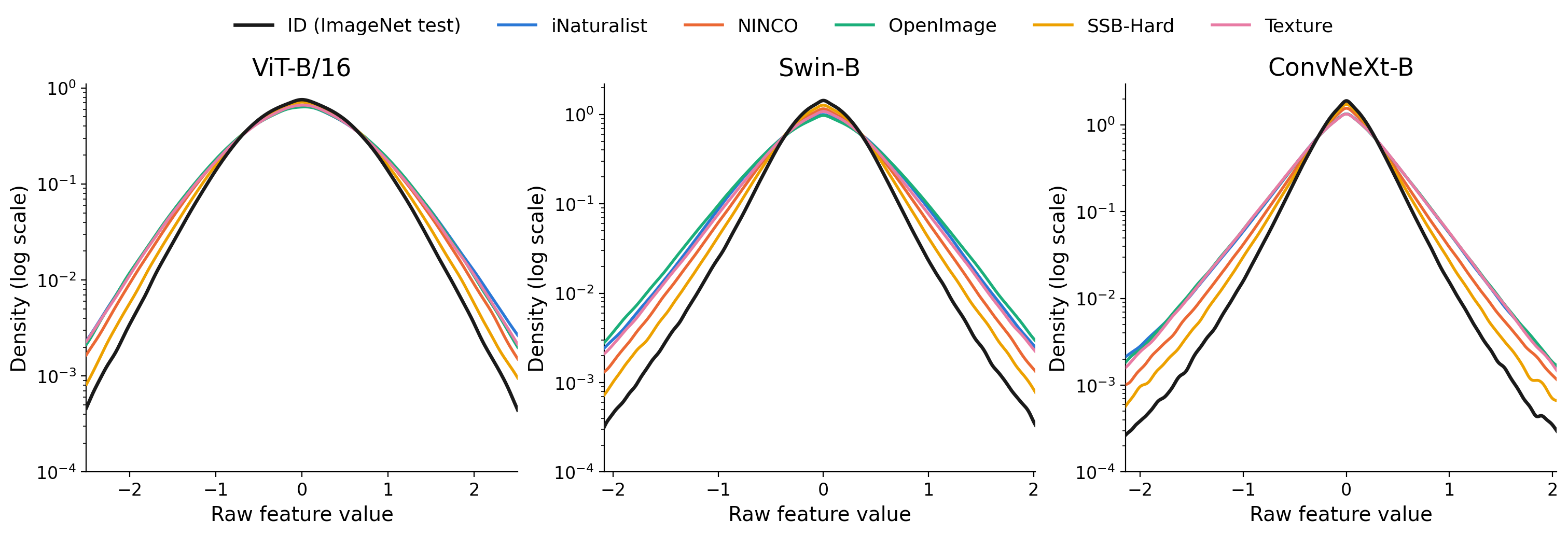}
\caption{Per-sample density (log scale) of raw penultimate activation values for ID and OOD samples on backbones with negative penultimate activations.}
\label{fig:activation_logscale}
\end{figure}

\section{Regularizing SCALE with Anti-coherence}
\label{sec:anti}

Activation-shaping methods are typically implemented by combining the shaped activation with a logit-based scoring function such as the energy score. We therefore first analyze how our primary object of study, SCALE, affects the energy score when combined in this manner. Throughout this analysis, $r$ refers to $r_{\text{abs}}=Q_{\text{abs}}/Q_{p,\text{abs}}$ as defined in the previous section. We first show, in Lemma~\ref{lem:maxlogit_energy}, that as the scaling factor increases, the max logit accounts for an increasingly large share of the energy score.

\begin{lemma}
\label{lem:maxlogit_energy}
Let $\mathbf{W} \in \mathbb{R}^{C \times d}$ be the classifier head's weight, and let $\ell_k := \mathbf{W}_k \cdot \mathbf{z}$ with $c^* := \arg\max_k \ell_k$ the unique maximizer, $\ell_{c^*} > 0$. For simplicity, we assume the bias is zero. The scaled logit and energy score are
\begin{align}
\boldsymbol{\ell}' = \mathbf{W}\big(\exp(r)\cdot\mathbf{z}\big), \qquad S(\mathbf{x}) = T\cdot\log\sum_{k=1}^{C}\exp(\ell'_k/T),
\end{align}
which decomposes as
\begin{align}
S(\mathbf{x};r) = \ell'_{c^*}(r) + \xi(r), \qquad \xi(r) := T\log\Big(1+\sum_{k\neq c^*}\exp\big(\exp(r)(\ell_k-\ell_{c^*})/T\big)\Big) \ge 0,
\end{align} 
where $\xi(r)$ is the contribution of the non-maximal logits. Let $\rho(r) := \ell'_{c^*}(r)/S(\mathbf{x};r)$ denote the fraction of the energy score attributable to the max logit. Then $\rho(r)$ is non-decreasing in $r$ with $\rho(r)\to1$ as $r\to\infty$.
\end{lemma}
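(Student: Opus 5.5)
Set $s := \exp(r)$, which is strictly increasing in $r$, and write $\Delta_k := \ell_{c^*}-\ell_k > 0$ for $k \neq c^*$; these gaps are positive because $c^*$ is the unique maximizer. Since the bias is zero, the scaled logits are $\ell'_k = s\,\ell_k$.

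\textbf{Decomposition.} Factor $\exp(s\ell_{c^*}/T)$ out of the log-sum-exp. This gives $S = s\ell_{c^*} + \xi$ with
\[
\xi(s) = T\log\Big(1+\sum_{k\neq c^*} e^{-s\Delta_k/T}\Big).
\]
The sum is nonnegative, so $\xi \ge 0$.

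\textbf{Monotonicity.} Because $\ell'_{c^*}=s\ell_{c^*}>0$ and $\xi\ge 0$, we have $S>0$ and
\[
\rho = \frac{s\ell_{c^*}}{s\ell_{c^*}+\xi} = \frac{1}{1+\xi(s)/(s\ell_{c^*})}.
\]
It therefore suffices to show that $g(s) := \xi(s)/(s\ell_{c^*})$ is non-increasing in $s>0$. Two facts combine:
\begin{itemize}
\item The numerator $\xi(s)$ is nonnegative and non-increasing in $s$. Each term $e^{-s\Delta_k/T}$ is decreasing because $\Delta_k>0$, and $\log$ is increasing.
\item The denominator $s\ell_{c^*}$ is positive and strictly increasing.
\end{itemize}
A nonnegative non-increasing function divided by a positive increasing function is non-increasing. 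Hence $g$ is non-increasing in $s$, so $\rho$ is non-decreasing in $s$, and thus in $r$ by composition with the increasing map $r\mapsto e^r$.

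If one wants an explicit check, compute
\[
g'(s) = \frac{\xi'(s)\,s - \xi(s)}{s^2\ell_{c^*}},
\]
and note that $\xi'\le 0$ and $\xi\ge 0$, so $g'(s)\le 0$. Strictness holds whenever $C\ge 2$, but it is not needed.

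\textbf{Limit.} As $r\to\infty$ we have $s\to\infty$, so $e^{-s\Delta_k/T}\to 0$ for every $k\neq c^*$. Hence $\xi(s)\to T\log 1 = 0$. Since $s\ell_{c^*}\to\infty$, $g(s)\to 0$ and $\rho\to 1$.

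The only delicate point is the sign bookkeeping. The hypothesis $\ell_{c^*}>0$ guarantees that $S>0$, so the ratio $\rho$ is well defined, and that the denominator $s\ell_{c^*}$ grows. The uniqueness of the maximizer gives $\Delta_k>0$, which is what makes $\xi$ decrease in $s$ and vanish in the limit. Without uniqueness, $\xi$ would tend to $T\log m$ with $m$ the multiplicity of the maximum; the limit $\rho\to 1$ would still hold, but monotonicity would need the same argument.
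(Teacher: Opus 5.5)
Your proof is correct and follows essentially the same route as the paper: factor the max logit out of the log-sum-exp to get $S=\ell'_{c^*}+\xi$, note that $\xi\ge 0$ is non-increasing and tends to $0$ while $\ell'_{c^*}=e^r\ell_{c^*}$ is positive, increasing and unbounded, then write $\rho=(1+\xi/\ell'_{c^*})^{-1}$. The differences are cosmetic. You substitute $s=e^r$, and you get the monotonicity of $\xi$ by composing monotone maps, whereas the paper computes $\xi'(r)$ explicitly.
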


Lemma~\ref{lem:maxlogit_energy} implies that as the scaling factor grows, the remaining terms of the energy score contribute progressively less. To verify this empirically, we compute the mean value of $\rho$, the fraction of the energy score attributable to the max logit, on ID and OOD ImageNet data under the OpenOOD v1.5 protocol; the results are reported in Table~\ref{tab:maxlogit_energy_ratio}. We find that after applying $r_{\text{abs}}$ the max logit term dominates the energy score for both ID and OOD samples across all backbones. This motivates us to reduce our subsequent analysis from the energy score itself to the max logit term alone. We therefore analyze how the max logit $\ell_{c^*}$ is affected by scaling.

Recall from the previous section that $\mathbf{W} \in \mathbb{R}^{C \times d}$ denotes the classifier head's weight matrix and let $\mathbf{W}_{c^*} \in \mathbb{R}^d$ denote its $c^*$-th row, the weight vector of the predicted class $c^*$. We decompose the contribution to the max logit prior to scaling as:
\begin{align}
\ell_{c^{*}} = C_p - C_n, \qquad C_p := \sum_{j:\, \mathbf{W}_{c^*,j}z_j>0} \mathbf{W}_{c^*,j}z_j, \qquad C_n := \sum_{j:\, \mathbf{W}_{c^*,j}z_j<0} |\mathbf{W}_{c^*,j}z_j|,
\end{align}
where $W_{c^*,j}$ denotes the $j^{\text{th}}$ component of $\mathbf{W}_{c^*}$, $C_p$ is the contribution to the max logit from components $z_j$ whose sign matches that of $W_{c^*,j}$, and $C_n$ is the contribution from sign-mismatched components. Since scaling multiplies every component of $\mathbf{z}$ by the same factor $\exp(r)$, it acts uniformly on this decomposition, giving $\ell'_{c^*} = \exp(r)\,\ell_{c^*} = \exp(r)(C_p-C_n)$. Under this decomposition, $\exp(r)C_p$ is the term that increases the max logit, whereas $\exp(r)C_n$ is the term that decreases it. We define the fraction of the terms constituting the logit that exert a negative influence as the \textbf{Anti-coherence}, denoted $A(\mathbf{z})$ and formulated as:
\begin{align}
A(\mathbf{z}) = \frac{C_n}{C_p+C_n} = \frac{\displaystyle\sum_{j:\, \mathbf{W}_{c^*,j}z_j<0} |\mathbf{W}_{c^*,j}z_j|}{\displaystyle\sum_{j:\, \mathbf{W}_{c^*,j}z_j>0} \mathbf{W}_{c^*,j}z_j + \sum_{j:\, \mathbf{W}_{c^*,j}z_j<0} |\mathbf{W}_{c^*,j}z_j|}.
\end{align}

\begin{table}[!t]
\centering
\caption{Mean ratio $\rho$ (ID/OOD) across ten backbones, without scaling (Raw Energy) and after applying our scaling $r_{\text{abs}}$.}
\label{tab:maxlogit_energy_ratio}
\resizebox{\textwidth}{!}{%
\begin{tabular}{lcccccccccc}
\toprule
 & ResNet-50 & ResNet-101 & Swin-B & SwinV2-B & ViT-B/16 & ConvNeXt-B & EfficientNet-B1 & MobileNetV2 & BEiTv2-B & DeiT3-B \\
\midrule
Raw (Energy)      & 0.976/0.923 & 0.981/0.934 & 0.971/0.928 & 0.970/0.921 & 0.962/0.906 & 0.943/0.879 & 0.947/0.881 & 0.970/0.922 & 0.970/0.908 & 0.780/0.318 \\
SCALE+Abs & 1.000/0.999 & 1.000/0.999 & 1.000/1.000 & 1.000/1.000 & 1.000/1.000 & 1.000/1.000 & 0.999/0.998 & 1.000/0.999 & 0.999/0.997 & 0.997/0.989 \\
\bottomrule
\end{tabular}%
}
\end{table}

\begin{figure}[t]
\centering
\includegraphics[width=\textwidth]{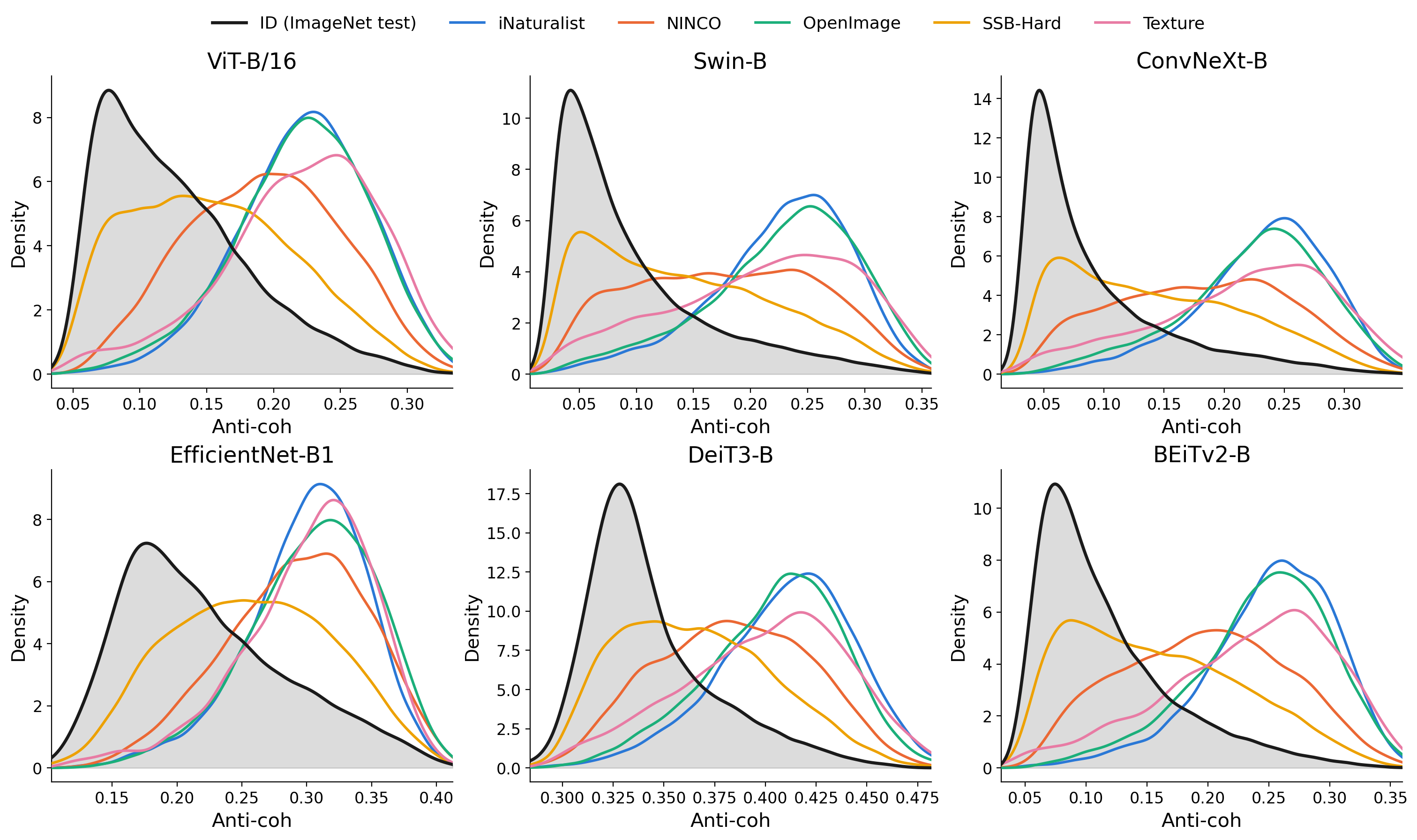}
\caption{Distribution of the Anti-coherence term $A(\mathbf{z})$ for ID and OOD samples across six backbones. OOD samples tend to exhibit higher $A(\mathbf{z})$ values than ID samples on all backbones. Additionally, SwinV2-B is omitted as it exhibits a similar pattern to Swin-B.}
\label{fig:anti_coh_main}
\end{figure}

Note that $\exp(r) > 0$ is order-preserving so that under the zero-bias assumption the identity of the predicted class $c^*$ remains invariant under scaling. Consequently, the sign of each $\mathbf{W}_{c^*,j}z_j$, and the partition into $C_p$ and $C_n$, is also unaffected by $r$, which makes $A(\mathbf{z})$ itself independent of $r$: it can be computed from either the pre-scaling or post-scaling activation without affecting its value. This is consistent with Lemma~\ref{lem:maxlogit_energy}, where scaling alters only how much the max logit contributes to the energy score rather than which logit attains the maximum or how it decomposes into $C_p$ and $C_n$.

To verify that $A(\mathbf{z})$ provides a signal meaningful for OOD detection, we visualize $A(\mathbf{z})$ on ID and OOD ImageNet data under the OpenOOD v1.5 protocol, with the results shown in Figure~\ref{fig:anti_coh_main}. The results show that the fraction of negatively contributing terms tends to be smaller for ID samples than for OOD samples on all backbones. This suggests that samples exhibiting such a larger fraction are more likely to be OOD and we therefore use it as a regularization term on the scaling factor, yielding
\begin{align}
r_{\text{anti}} = r_{\text{abs}} - \lambda A(\mathbf{z}),
\end{align}
where $\lambda \geq 0$ is a non-negative hyperparameter controlling the strength of the regularization. This regularization suppresses the scaling factor in proportion to the strength of the sign-mismatched contribution and thereby incorporates information from the classifier head directly into the scaling operation. Since $A(\mathbf{z})$ requires only the activation and classifier weight of a single test sample, our method remains practical and requires neither access to training data nor gradient computation, unlike existing methods that rely on such resources, including CADRef, which partitions by a similar sign criterion but requires class-conditional training averages to compute its error terms.

\section{Experiments}
\label{sec:exp}

\subsection{Experimental Setup}
\label{subsec:setup}
Following the OpenOOD v1.5 protocol, we evaluate on both a small-scale and a large-scale benchmark. For the small-scale benchmark, we adopt CIFAR-10 as the ID dataset with a ResNet-18 backbone, using the pretrained checkpoint released with OpenOOD v1.5. MNIST~\citep{deng2012mnist}, SVHN~\citep{netzer2011reading}, Texture~\citep{cimpoi2014describing}, and Places365~\citep{zhou2017places} serve as far-OOD datasets, and TIN~\citep{le2015tiny} and CIFAR-100 as near-OOD datasets; results for each individual OOD dataset are reported in Table~\ref{tab:cifar10_main_exp}.

For the large-scale benchmark, we adopt ImageNet as the ID dataset and evaluate across ten backbone architectures spanning convolutional and transformer families: ResNet-50, ResNet-101, Swin-B, SwinV2-B~\citep{liu2022swin}, ViT-B/16, ConvNeXt-B~\citep{liu2022convnet}, EfficientNet-B1~\citep{tan2019efficientnet}, MobileNetV2~\citep{sandler2018mobilenetv2}, BEiTv2-B~\citep{peng2022beit}, and DeiT3-B~\citep{touvron2022deit}. Pretrained weights are obtained from torchvision~\citep{torchvision2016} where available and from the timm library~\citep{rw2019timm} otherwise. iNaturalist~\citep{van2018inaturalist}, Texture, and OpenImage-O~\citep{wang2022vim} serve as far-OOD datasets and SSB-Hard~\citep{vaze2021open} and NINCO~\citep{bitterwolf2023or} as near-OOD datasets; Table~\ref{tab:imagenet_main_exp} reports results for each backbone averaged over the five OOD datasets.

We compare our method against activation-shaping methods and basic post-hoc baselines:
MSP \citep{hendrycks2017a}, MaxLogit \citep{hendrycks2022scaling}, Energy, ReAct, ASH-S, SCALE, VRA, OptFS, LTS \citep{djurisic2025logit}, LAPS, and AdaSCALE-A/L \citep{regmi2026adascale}. We exclude DAVIS and Catalyst, which require a GAP-based architecture, and BATS, which requires batch normalization, as these methods cannot be implemented in an architecture-agnostic manner across all backbones considered. We refer to our method, which scales the activation using $r_{\text{anti}}$, as \textbf{Anti-coh}. We report AUROC and FPR@95, the two metrics most commonly used to measure OOD detection performance, for all methods. Hyperparameter search spaces and other implementation details are provided in Appendix~\ref{appendix:implementation}, and the hyperparameter sensitivity analysis is provided in Appendix~\ref{appendix:hyp}.

\subsection{Main Results}
\label{subsec:main_results}

Table~\ref{tab:imagenet_main_exp} shows that our method achieves the lowest average FPR@95 across all ten backbones and the highest average AUROC on six of the ten. Our method also improves upon SCALE, on which it is built, across every backbone and shows especially large gains on backbones with negative penultimate activations. On ResNet and MobileNet, however, the improvement over SCALE is comparatively modest. Since these architectures always yield non-negative activations, the original scaling factor already captures sufficient information about ID/OOD separability, leaving less additional signal for the sign-based Anti-coherence term to contribute.

As for the four backbones where our method ranks second in AUROC, namely SwinV2-B, DeiT3-B, EfficientNet-B1, and BEiTv2-B, the gap to the best-performing method is only 0.07, 0.37, 1.10, and 0.04 points, respectively, with LAPS attaining the best AUROC on SwinV2-B, DeiT3-B, and BEiTv2-B and OptFS on EfficientNet-B1. Despite requiring no access to training data, unlike LAPS and OptFS, our method thus ties or nearly ties these training-data-dependent methods in AUROC while consistently improving upon them in FPR@95. 

\begin{table}[t]
\centering
\caption{OOD detection performance (AUROC/FPR@95, \%) averaged over five OOD datasets, across ten ImageNet backbones under the OpenOOD v1.5 protocol. Methods are grouped by resource requirement. The best value in each column is highlighted in \textbf{bold}.}
\label{tab:imagenet_main_exp}
\resizebox{\textwidth}{!}{%
\begin{tabular}{lccccccccccc}
\toprule
Method & RN-50 & Swin-B & ViT-B/16 & SwinV2-B & ConvNeXt-B & RN-101 & DeiT3-B & EfficientNet-B1 & MobileNetV2 & BEiTv2-B & Avg \\
\midrule
\multicolumn{12}{c}{\textit{Training-data-dependent}} \\
\midrule
ReAct       & 86.92/50.72 & 81.47/68.06 & 78.71/70.81 & 84.87/59.48 & 80.10/74.95 & 87.65/49.18 & 74.73/82.19 & 77.71/85.36 & 83.36/61.45 & 84.94/54.49 & 82.05/65.67 \\
VRA         & 87.69/46.53 & 71.80/83.00 & 41.40/90.44 & 77.49/83.27 & 69.10/80.89 & 88.13/44.23 & 41.36/94.86 & 78.33/74.65 & 84.01/56.44 & 38.95/96.07 & 67.83/75.04 \\
OptFS       & 87.64/47.38 & 83.12/63.55 & 82.25/66.98 & 84.13/62.06 & 85.50/59.41 & 88.00/47.25 & 83.31/61.89 & \textbf{83.62}/58.66 & 83.98/60.11 & 85.81/53.92 & 84.74/58.12 \\
LAPS        & 86.34/52.15 & 83.22/67.58 & 80.25/71.50 & \textbf{85.13}/61.99 & 84.76/68.22 & 87.37/49.96 & \textbf{83.81}/66.59 & 79.41/74.50 & 86.40/47.35 & \textbf{86.04}/54.77 & 84.27/61.46 \\
\midrule
\multicolumn{12}{c}{\textit{Gradient-dependent}} \\
\midrule
AdaSCALE-A  & 88.85/47.66 & 79.06/66.18 & 78.62/68.03 & 81.15/60.44 & 83.65/63.81 & 89.05/46.79 & 82.78/61.06 & 78.59/68.24 & 84.48/59.47 & 83.71/54.00 & 82.99/59.57 \\
AdaSCALE-L  & 89.81/42.79 & 78.51/69.53 & 75.87/74.40 & 80.14/62.76 & 82.61/73.38 & 90.03/42.49 & 82.23/65.51 & 75.23/79.84 & 85.70/53.80 & 82.21/57.63 & 82.23/62.21 \\
\midrule
\multicolumn{12}{c}{\textit{Training-free, gradient-free}} \\
\midrule
MSP         & 81.21/68.59 & 79.33/66.07 & 80.62/65.99 & 81.75/62.90 & 80.33/62.75 & 81.19/68.08 & 83.01/61.05 & 79.47/68.31 & 79.00/74.55 & 83.81/56.42 & 80.97/65.47 \\
MaxLogit    & 84.00/64.49 & 72.15/67.56 & 77.08/65.60 & 78.12/61.89 & 73.80/69.17 & 84.52/62.49 & 81.49/62.10 & 74.38/72.76 & 81.61/69.15 & 81.51/53.97 & 78.87/64.92 \\
Energy      & 83.71/64.91 & 64.20/76.41 & 72.05/70.83 & 72.82/68.84 & 57.63/87.01 & 84.31/63.05 & 77.71/75.02 & 66.60/87.28 & 81.33/68.80 & 78.72/59.03 & 73.91/72.12 \\
ASH-S       & 89.53/40.20 & 23.63/98.94 & 22.89/99.07 & 20.41/99.13 & 26.60/97.05 & 89.68/39.83 & 52.26/90.67 & 47.94/98.59 & 85.47/50.74 & 20.13/99.15 & 47.85/81.34 \\
SCALE       & 90.24/38.97 & 63.58/87.50 & 68.31/72.90 & 51.24/89.40 & 64.32/92.10 & 90.37/38.73 & 80.49/65.09 & 50.79/97.87 & 86.66/47.43 & 66.84/73.95 & 71.28/70.39 \\
LTS         & 90.12/39.38 & 76.71/66.93 & 79.35/63.43 & 79.66/60.71 & 80.37/62.02 & 90.34/38.91 & 82.04/60.75 & 64.32/86.78 & 86.64/47.72 & 78.23/57.83 & 80.78/58.45 \\
Anti-coh (Ours) & \textbf{90.65}/\textbf{38.65} & \textbf{83.29}/\textbf{61.39} & \textbf{82.92}/\textbf{63.18} & 85.06/\textbf{56.00} & \textbf{86.32}/\textbf{57.33} & \textbf{90.62}/\textbf{38.46} & 83.44/\textbf{57.21} & 82.52/\textbf{57.48} & \textbf{87.27}/\textbf{46.10} & 86.00/\textbf{51.82} & \textbf{85.81}/\textbf{52.76} \\
\bottomrule
\end{tabular}%
}
\end{table}

\begin{table}[t]
\centering
\caption{OOD detection performance (AUROC/FPR@95, \%) on CIFAR-10 (ResNet-18) under the OpenOOD v1.5 protocol. Methods are grouped by resource requirement. The best value in each column is highlighted in \textbf{bold}.}
\label{tab:cifar10_main_exp}
\resizebox{\textwidth}{!}{%
\begin{tabular}{lcccccccc}
\toprule
\multirow{2}{*}{Method} & \multicolumn{2}{c}{Near-OOD} & \multicolumn{4}{c}{Far-OOD} & \multirow{2}{*}{Avg} \\
\cmidrule(lr){2-3} \cmidrule(lr){4-7}
 & TIN & CIFAR-100 & MNIST & Places365 & SVHN & Texture & \\
\midrule
\multicolumn{8}{c}{\textit{Training-data-dependent}} \\
\midrule
ReAct       & 86.25/50.24 & 84.87/56.02 & 99.26/4.40 & 90.20/42.83 & 88.74/53.55 & 86.62/53.60 & 89.32/43.44 \\
VRA         & 86.31/50.16 & 84.96/55.90 & 99.25/4.44 & 90.21/42.89 & 88.82/53.34 & 86.76/53.46 & 89.39/43.36 \\
OptFS       & 85.63/54.59 & 84.38/59.60 & 98.96/5.90 & 89.90/46.30 & 87.95/59.09 & 85.70/58.28 & 88.75/47.29 \\
LAPS        & 86.91/49.42 & 85.68/54.68 & 99.29/4.21 & 89.34/44.62 & 90.01/49.48 & 88.72/49.36 & 89.99/41.96 \\
\midrule
\multicolumn{8}{c}{\textit{Gradient-dependent}} \\
\midrule
AdaSCALE-A  & 83.65/50.79 & 81.99/56.04 & 98.62/8.12 & 88.63/41.84 & 89.20/49.24 & 84.18/52.15 & 87.71/43.03 \\
AdaSCALE-L  & 77.94/53.45 & 75.86/58.37 & 96.49/16.89 & 87.41/\textbf{39.95} & 83.52/54.36 & 77.28/56.40 & 83.08/46.57 \\
\midrule
\multicolumn{8}{c}{\textit{Training-free, gradient-free}} \\
\midrule
MSP         & 86.73/58.84 & 86.13/61.86 & 97.51/16.08 & 87.89/56.38 & 90.50/58.41 & 88.30/58.76 & 89.51/51.72 \\
MaxLogit    & 86.36/48.52 & 84.95/53.75 & 99.30/3.52 & 88.11/44.60 & 90.92/45.21 & 87.67/48.85 & 89.55/40.74 \\
Energy      & 86.44/48.25 & 85.00/53.82 & \textbf{99.43}/\textbf{3.06} & 88.22/44.03 & 91.04/\textbf{44.37} & 87.74/48.78 & 89.64/\textbf{40.38} \\
ASH-S       & 81.52/51.39 & 79.26/57.06 & 98.50/9.25 & 87.05/44.10 & 83.20/56.87 & 81.87/51.58 & 85.23/45.04 \\
SCALE       & 82.58/50.77 & 80.41/56.73 & 98.87/6.28 & 87.79/42.23 & 84.89/55.86 & 82.71/52.93 & 86.21/44.13 \\
LTS         & 75.10/54.35 & 71.77/60.39 & 97.34/12.13 & 85.11/41.18 & 70.30/65.27 & 73.33/58.24 & 78.82/48.59 \\
Anti-coh (Ours) & \textbf{89.04}/\textbf{47.15} & \textbf{88.18}/\textbf{52.72} & 98.89/6.56 & \textbf{91.16}/41.45 & \textbf{91.34}/49.90 & \textbf{91.12}/\textbf{46.28} & \textbf{91.62}/40.68 \\
\bottomrule
\end{tabular}%
}
\vspace{-0.2cm}
\end{table}

On the small-scale benchmark (Table~\ref{tab:cifar10_main_exp}), our method outperforms SCALE on most individual datasets and achieves the highest average AUROC among baselines. As for the gain over SCALE, recall that ResNet-18 always yields non-negative activations and this was the setting where the gain was previously modest on the large-scale benchmark. On CIFAR-10, however, the pattern differs: our method achieves substantial gains over SCALE on most datasets and delivers overall stronger performance. Compared to methods that require additional resources such as training data or gradients, our method achieves the best overall average performance despite trailing several of them in FPR@95 on MNIST and Places365, most notably AdaSCALE-L on Places365 by 1.50 points. Interestingly, the plain Energy baseline achieves strong performance on MNIST and SVHN, yet fails to maintain such strong performance on the remaining datasets.

In summary, our method consistently improves upon SCALE across both ID settings and, within the training-free, gradient-free group, attains the best average AUROC and FPR@95 on every individual ImageNet backbone and the best average AUROC on CIFAR-10. Beyond this group, it further ties or outperforms methods that require additional resources such as training data or gradients on most backbones in both settings. We additionally evaluate on CIFAR-100 using the pretrained checkpoint provided by OpenOOD. Our method underperforms several baselines here, but most methods attain noticeably lower absolute performance on CIFAR-100 than on CIFAR-10 or ImageNet with ResNet backbones, which is somewhat unexpected given the smaller scale. We suspect this stems from the limited number of images per class, and discuss this along with per-dataset results for every backbone under the ImageNet ID setting further in Appendix~\ref{appendix:full_results}.


\subsection{Ablation Studies}
\label{subsec:ablation}
\textbf{Effect of Absolute Fix.} To verify that the Absolute Fix yields a genuine improvement, we report in Table~\ref{tab:abs_fix_ablation_avg} the performance of the original SCALE and the ReLU-based fixing method proposed in LTS, compared against our method, on backbones with negative penultimate activations. Our method not only improves performance over SCALE on every model but also achieves a consistent improvement over the ReLU-based method while always outperforming the plain Energy baseline. Notably, on BEiTv2-B the ReLU-based method actually underperforms Energy. This improvement remains substantial on EfficientNet-B1, despite lacking the symmetric heavy tail that originally motivated the Absolute Fix, and on BEiTv2-B, whose activation distribution resembles that of ViT. On DeiT3-B, however, our method leaves AUROC unchanged, which we attribute to fluctuations in the activation density at large absolute values rather than a clean monotonic decay, causing confusion when estimating the scaling factor. Detailed per-dataset performance is provided in Appendix~\ref{appendix:absolute_fix}.

\begin{table}[h]
\centering
\caption{Average AUROC/FPR@95 (\%) of the Absolute Fix against Energy, SCALE, and the ReLU-based correction, averaged over five OOD datasets, on backbones with negative penultimate activations under the ImageNet ID setting. $\Delta(\text{SCALE})$ and $\Delta(\text{ReLU})$ denote the change from SCALE and SCALE+ReLU to SCALE+Abs. The best value in each column is highlighted in \textbf{bold}.}
\label{tab:abs_fix_ablation_avg}
\resizebox{\textwidth}{!}{%
\begin{tabular}{lccccccc}
\toprule
Method & Swin-B & SwinV2-B & ViT-B/16 & ConvNeXt-B & DeiT3-B & EfficientNet-B1 & BEiTv2-B \\
\midrule
Energy (Baseline) & 64.20/76.41 & 72.82/68.84 & 72.05/70.83 & 57.63/87.01 & 77.71/75.02 & 66.60/87.28 & 78.72/59.03 \\
SCALE             & 63.58/87.50 & 51.24/89.40 & 68.31/72.90 & 64.32/92.10 & \textbf{80.49}/65.09 & 50.79/97.87 & 66.84/73.95 \\
SCALE+ReLU        & 76.06/70.90 & 76.72/66.88 & 78.73/63.94 & 79.62/65.80 & 80.19/66.26 & 67.30/77.54 & 75.38/62.96 \\
SCALE+Abs (Ours)  & \textbf{77.51}/\textbf{69.03} & \textbf{81.05}/\textbf{61.72} & \textbf{81.59}/\textbf{61.10} & \textbf{81.45}/\textbf{64.08} & \textbf{80.49}/\textbf{62.53} & \textbf{80.89}/\textbf{62.79} & \textbf{81.17}/\textbf{55.14} \\
\midrule
$\Delta$(SCALE) & +13.93/$-$18.47 & +29.81/$-$27.68 & +13.28/$-$11.80 & +17.13/$-$28.02 & +0.00/$-$2.56 & +30.10/$-$35.08 & +14.33/$-$18.81 \\
$\Delta$(ReLU)  & +1.45/$-$1.87 & +4.33/$-$5.16 & +2.86/$-$2.84 & +1.83/$-$1.72 & +0.30/$-$3.73 & +13.59/$-$14.75 & +5.79/$-$7.82 \\
\bottomrule
\end{tabular}%
}
\end{table}

\textbf{Effect of Anti-coherence.} Building on the Absolute Fix, we further isolate the effect of the Anti-coherence term $A(\mathbf{z})$ by comparing Anti-coh against SCALE+Abs, which already incorporates the gains from the Absolute Fix. The results show improvement on every metric except FPR@95 on ViT-B/16. In particular, the improvement is largest on backbones with negative penultimate activations, whereas, as noted in Section~\ref{subsec:main_results}, backbones with only non-negative activations show a comparatively modest gain. These results indicate that the Anti-coherence term provides a contribution beyond the Absolute Fix alone.

\begin{table}[h]
\centering
\caption{Average AUROC/FPR@95 (\%) of Anti-coh (Ours) against SCALE+Abs, averaged over five OOD datasets across ten ImageNet backbones. $\Delta$ denotes the change from SCALE+Abs to Anti-coh. The best value in each column is highlighted in \textbf{bold}.}
\label{tab:scale_absfix_anticoh_progression}
\resizebox{\textwidth}{!}{%
\begin{tabular}{lcccccccccc}
\toprule
Method & RN-50 & RN-101 & MobileNetV2 & Swin-B & SwinV2-B & ViT-B/16 & ConvNeXt-B & DeiT3-B & EfficientNet-B1 & BEiTv2-B \\
\midrule
SCALE+Abs   & 90.24/38.97 & 90.37/38.73 & 86.66/47.43 & 77.51/69.03 & 81.05/61.72 & 81.59/\textbf{61.10} & 81.45/64.08 & 80.49/62.53 & 80.89/62.79 & 81.17/55.14 \\
Anti-coh (Ours)  & \textbf{90.65}/\textbf{38.65} & \textbf{90.62}/\textbf{38.46} & \textbf{87.27}/\textbf{46.10} & \textbf{83.29}/\textbf{61.39} & \textbf{85.06}/\textbf{56.00} & \textbf{82.92}/63.18 & \textbf{86.32}/\textbf{57.33} & \textbf{83.44}/\textbf{57.21} & \textbf{82.52}/\textbf{57.48} & \textbf{86.00}/\textbf{51.82} \\
\midrule
$\Delta$    & +0.41/$-$0.32 & +0.25/$-$0.27 & +0.61/$-$1.33 & +5.78/$-$7.64 & +4.01/$-$5.72 & +1.33/+2.08 & +4.87/$-$6.75 & +2.95/$-$5.32 & +1.63/$-$5.31 & +4.83/$-$3.32 \\
\bottomrule
\end{tabular}%
}
\end{table}

\section{Conclusion}
\label{sec:conclusion}

Towards addressing OOD detection, we show that SCALE fails to compute a well-defined scaling factor on backbones with negative penultimate activations, since positive and negative contributions cancel within the underlying signed sum. We propose the Absolute Fix, which computes the scaling factor over absolute-valued activations to preserve this information, and Anti-coherence, which decomposes each sample's contribution to the maximum logit into sign-aligned and sign-mismatched terms and regularizes the scaling factor by the fraction that drives the logit downward. Under the OpenOOD v1.5 protocol, our method requires no training data or gradient computation yet achieves the best average AUROC and FPR@95 among training-free methods on every individual ImageNet backbone and the best average AUROC on CIFAR-10, tying or outperforming methods relying on such resources on most backbones in both settings. These gains are especially large on backbones with negative activations though a gap with representation-based methods remains (see Appendix~\ref{appendix:Rep_method}), and closing it without relying on training data is an important direction for future work.

\subsection*{AI use statement}

In this work, we used generative AI tools to assist with implementing methods and with retrieval and discovery of related work. We have not used generative AI tools to propose or refine hypotheses; the core ideas of this work originate solely from the authors. We also used generative AI tools to verify the correctness of the mathematical proof, checking each derivation step against the stated assumptions. Additionally, we used generative AI tools for editing the paper to improve readability and clarity of writing. We have reviewed all AI-assisted work: LLM-generated code was verified and tested for correctness by the authors, the proof verification was independently confirmed by the authors, and all AI-edited text was manually checked against the intended scientific content. We take responsibility for the final content of this work, including text, claims, or artifacts produced with the aid of generative AI.

\subsection*{Ethics statement}

This work proposes an algorithmic improvement to OOD detection and is evaluated exclusively on established public benchmark datasets under the OpenOOD v1.5 protocol. It does not involve human subjects, new data collection, or the release of any dataset. Although the proposed method is intended to improve the reliability of deployed machine learning systems, no OOD detection method offers a strict guarantee of coverage and should not be treated as a sole safeguard in high-stakes deployment settings. We do not foresee privacy, discrimination, or conflict-of-interest concerns arising from this research.

\subsection*{Reproducibility statement}
Our method builds on publicly available ImageNet-1K pretrained checkpoints from torchvision~\citep{torchvision2016} and the timm library~\citep{rw2019timm}, and is evaluated on the publicly available OpenOOD v1.5 benchmark~\citep{zhang2024openood}, which specifies the ID/OOD dataset splits and evaluation protocol used throughout the paper. The complete proof of Lemma~\ref{lem:maxlogit_energy} is provided in Appendix~\ref{appendix:proof}, along with the assumptions under which it holds. Hyperparameter search spaces, checkpoint details, and other implementation details are provided in Appendix~\ref{appendix:implementation} and source code is included in the supplementary material.



\bibliography{iclr2027_conference}

@inproceedings{isaku2025out,
  title={Out of Distribution Detection in Self-adaptive Robots with AI-powered Digital Twins},
  author={Isaku, Erblin and Sartaj, Hassan and Ali, Shaukat and Sanguino, Beatriz and Wang, Tongtong and Li, Guoyuan and Zhang, Houxiang and Peyrucain, Thomas},
  booktitle={2025 40th IEEE/ACM International Conference on Automated Software Engineering (ASE)},
  pages={3403--3414},
  year={2025},
  organization={IEEE}
}

@inproceedings{corsini2023existing,
  title={Are existing out-of-distribution techniques suitable for network intrusion detection?},
  author={Corsini, Andrea and Yang, Shanchieh Jay},
  booktitle={2023 IEEE Conference on Communications and Network Security (CNS)},
  pages={1--9},
  year={2023},
  organization={IEEE}
}

@article{yang2024generalized,
  title={Generalized out-of-distribution detection: A survey},
  author={Yang, Jingkang and Zhou, Kaiyang and Li, Yixuan and Liu, Ziwei},
  journal={International Journal of Computer Vision},
  volume={132},
  number={12},
  pages={5635--5662},
  year={2024},
  publisher={Springer}
}

@article{hong2024out,
  title={Out-of-distribution detection in medical image analysis: A survey},
  author={Hong, Zesheng and Yue, Yubiao and Chen, Yubin and Cong, Lele and Lin, Huanjie and Luo, Yuanmei and Wang, Mini Han and Wang, Weidong and Xu, Jialong and Yang, Xiaoqi and others},
  journal={arXiv preprint arXiv:2404.18279},
  year={2024}
}

@inproceedings{
ming2023how,
title={How to Exploit Hyperspherical Embeddings for Out-of-Distribution Detection?},
author={Yifei Ming and Yiyou Sun and Ousmane Dia and Yixuan Li},
booktitle={The Eleventh International Conference on Learning Representations },
year={2023},
url={https://openreview.net/forum?id=aEFaE0W5pAd}
}

@inproceedings{
du2022vos,
title={VOS: Learning What You Don't Know by Virtual Outlier Synthesis },
author={Xuefeng Du and Zhaoning Wang and Mu Cai and Sharon Li},
booktitle={International Conference on Learning Representations},
year={2022},
url={https://openreview.net/forum?id=TW7d65uYu5M}
}

@article{zhu2023diversified,
  title={Diversified outlier exposure for out-of-distribution detection via informative extrapolation},
  author={Zhu, Jianing and Geng, Yu and Yao, Jiangchao and Liu, Tongliang and Niu, Gang and Sugiyama, Masashi and Han, Bo},
  journal={Advances in neural information processing systems},
  volume={36},
  pages={22702--22734},
  year={2023}
}

@inproceedings{
tao2023nonparametric,
title={Non-parametric Outlier Synthesis},
author={Leitian Tao and Xuefeng Du and Jerry Zhu and Yixuan Li},
booktitle={The Eleventh International Conference on Learning Representations },
year={2023},
url={https://openreview.net/forum?id=JHklpEZqduQ}
}

@article{liu2020energy,
  title={Energy-based out-of-distribution detection},
  author={Liu, Weitang and Wang, Xiaoyun and Owens, John and Li, Yixuan},
  journal={Advances in neural information processing systems},
  volume={33},
  pages={21464--21475},
  year={2020}
}

@inproceedings{
xu2024scaling,
title={Scaling for Training Time and Post-hoc Out-of-distribution Detection Enhancement},
author={Kai Xu and Rongyu Chen and Gianni Franchi and Angela Yao},
booktitle={The Twelfth International Conference on Learning Representations},
year={2024},
url={https://openreview.net/forum?id=RDSTjtnqCg}
}

@inproceedings{
djurisic2023extremely,
title={Extremely Simple Activation Shaping for Out-of-Distribution Detection},
author={Andrija Djurisic and Nebojsa Bozanic and Arjun Ashok and Rosanne Liu},
booktitle={The Eleventh International Conference on Learning Representations },
year={2023},
url={https://openreview.net/forum?id=ndYXTEL6cZz}
}

@article{sun2021react,
  title={React: Out-of-distribution detection with rectified activations},
  author={Sun, Yiyou and Guo, Chuan and Li, Yixuan},
  journal={Advances in neural information processing systems},
  volume={34},
  pages={144--157},
  year={2021}
}

@inproceedings{
regmi2026adascale,
title={Ada{SCALE}: Adaptive Scaling for {OOD} Detection},
author={Sudarshan Regmi},
booktitle={Forty-third International Conference on Machine Learning},
year={2026},
url={https://openreview.net/forum?id=iXPK0rsCE9}
}

@inproceedings{liu2021swin,
  title={Swin transformer: Hierarchical vision transformer using shifted windows},
  author={Liu, Ze and Lin, Yutong and Cao, Yue and Hu, Han and Wei, Yixuan and Zhang, Zheng and Lin, Stephen and Guo, Baining},
  booktitle={Proceedings of the IEEE/CVF international conference on computer vision},
  pages={10012--10022},
  year={2021}
}

@inproceedings{he2016deep,
  title={Deep residual learning for image recognition},
  author={He, Kaiming and Zhang, Xiangyu and Ren, Shaoqing and Sun, Jian},
  booktitle={Proceedings of the IEEE conference on computer vision and pattern recognition},
  pages={770--778},
  year={2016}
}

@inproceedings{
dosovitskiy2021an,
title={An Image is Worth 16x16 Words: Transformers for Image Recognition at Scale},
author={Alexey Dosovitskiy and Lucas Beyer and Alexander Kolesnikov and Dirk Weissenborn and Xiaohua Zhai and Thomas Unterthiner and Mostafa Dehghani and Matthias Minderer and Georg Heigold and Sylvain Gelly and Jakob Uszkoreit and Neil Houlsby},
booktitle={International Conference on Learning Representations},
year={2021},
url={https://openreview.net/forum?id=YicbFdNTTy}
}

@article{djurisic2025logit,
  title={Logit scaling for out-of-distribution detection},
  author={Djurisic, Andrija and Liu, Rosanne and Nikolic, Mladen},
  journal={Machine Vision and Applications},
  volume={36},
  number={5},
  pages={108},
  year={2025},
  publisher={Springer}
}

@inproceedings{NEURIPS2025_ee4432ad,
 author = {Wagner, Felix and Saha, Pramit and Anthony, Harry and Noble, Alison and Kamnitsas, Konstantinos},
 booktitle = {Advances in Neural Information Processing Systems},
 doi = {10.52202/085713-5433},
 editor = {D. Belgrave and C. Zhang and H. Lin and R. Pascanu and P. Koniusz and M. Ghassemi and N. Chen},
 pages = {162740--162769},
 publisher = {Curran Associates, Inc.},
 title = {DIsoN: Decentralized Isolation Networks for Out-of-Distribution Detection in Medical Imaging},
 url = {https://proceedings.neurips.cc/paper_files/paper/2025/file/ee4432ad0aa3ab4c6c3dee47b2d541b5-Paper-Conference.pdf},
 volume = {38, Main Conference},
 year = {2025}
}

@article{zhang2024openood,
  title={OpenOOD v1. 5: Enhanced Benchmark for Out-of-Distribution Detection},
  author={Zhang, Jingyang and Yang, Jingkang and Wang, Pengyun and Wang, Haoqi and Lin, Yueqian and Zhang, Haoran and Sun, Yiyou and Du, Xuefeng and Li, Yixuan and Liu, Ziwei and others},
  journal={Journal of Data-centric Machine Learning Research},
  year={2024},
  publisher={DMLR}
}

@inproceedings{deng2009imagenet,
  title={Imagenet: A large-scale hierarchical image database},
  author={Deng, Jia and Dong, Wei and Socher, Richard and Li, Li-Jia and Li, Kai and Fei-Fei, Li},
  booktitle={2009 IEEE conference on computer vision and pattern recognition},
  pages={248--255},
  year={2009},
  organization={Ieee}
}

@article{krizhevsky2009learning,
  title={Learning multiple layers of features from tiny images},
  author={Krizhevsky, Alex and Hinton, Geoffrey and others},
  year={2009},
  publisher={Toronto, ON, Canada}
}

@inproceedings{
hendrycks2017a,
title={A Baseline for Detecting Misclassified and Out-of-Distribution Examples in Neural Networks},
author={Dan Hendrycks and Kevin Gimpel},
booktitle={International Conference on Learning Representations},
year={2017},
url={https://openreview.net/forum?id=Hkg4TI9xl}
}

@inproceedings{hendrycks2022scaling,
  title={Scaling Out-of-Distribution Detection for Real-World Settings},
  author={Hendrycks, Dan and Basart, Steven and Mazeika, Mantas and Zou, Andy and Kwon, Joseph and Mostajabi, Mohammadreza and Steinhardt, Jacob and Song, Dawn},
  booktitle={International Conference on Machine Learning},
  pages={8759--8773},
  year={2022},
  organization={PMLR}
}

@article{ren2021simple,
  title={A simple fix to mahalanobis distance for improving near-ood detection},
  author={Ren, Jie and Fort, Stanislav and Liu, Jeremiah and Roy, Abhijit Guha and Padhy, Shreyas and Lakshminarayanan, Balaji},
  journal={arXiv preprint arXiv:2106.09022},
  year={2021}
}

@article{lee2018simple,
  title={A simple unified framework for detecting out-of-distribution samples and adversarial attacks},
  author={Lee, Kimin and Lee, Kibok and Lee, Honglak and Shin, Jinwoo},
  journal={Advances in neural information processing systems},
  volume={31},
  year={2018}
}

@inproceedings{muller2025mahalanobis++,
  title={Mahalanobis++: Improving OOD Detection via Feature Normalization},
  author={M{\"u}ller, Maximilian and Hein, Matthias},
  booktitle={International Conference on Machine Learning},
  pages={45151--45184},
  year={2025},
  organization={PMLR}
}

@inproceedings{park2023nearest,
  title={Nearest neighbor guidance for out-of-distribution detection},
  author={Park, Jaewoo and Jung, Yoon Gyo and Teoh, Andrew Beng Jin},
  booktitle={Proceedings of the IEEE/CVF international conference on computer vision},
  pages={1686--1695},
  year={2023}
}

@inproceedings{sun2022out,
  title={Out-of-distribution detection with deep nearest neighbors},
  author={Sun, Yiyou and Ming, Yifei and Zhu, Xiaojin and Li, Yixuan},
  booktitle={International conference on machine learning},
  pages={20827--20840},
  year={2022},
  organization={PMLR}
}

@inproceedings{liu2024fast,
  title={Fast Decision Boundary based Out-of-Distribution Detector},
  author={Liu, Litian and Qin, Yao},
  booktitle={International Conference on Machine Learning},
  pages={31728--31746},
  year={2024},
  organization={PMLR}
}

@inproceedings{liu2025detecting,
  title={Detecting out-of-distribution through the lens of neural collapse},
  author={Liu, Litian and Qin, Yao},
  booktitle={Proceedings of the Computer Vision and Pattern Recognition Conference},
  pages={15424--15433},
  year={2025}
}

@article{papyan2020prevalence,
  title={Prevalence of neural collapse during the terminal phase of deep learning training},
  author={Papyan, Vardan and Han, XY and Donoho, David L},
  journal={Proceedings of the National Academy of Sciences},
  volume={117},
  number={40},
  pages={24652--24663},
  year={2020},
  publisher={National Academy of Sciences}
}

@article{xu2023vra,
  title={VRA: Variational rectified activation for out-of-distribution detection},
  author={Xu, Mingyu and Lian, Zheng and Liu, Bin and Tao, Jianhua},
  journal={Advances in Neural Information Processing Systems},
  volume={36},
  pages={28941--28959},
  year={2023}
}

@inproceedings{sun2022dice,
  title={Dice: Leveraging sparsification for out-of-distribution detection},
  author={Sun, Yiyou and Li, Yixuan},
  booktitle={European conference on computer vision},
  pages={691--708},
  year={2022},
  organization={Springer}
}

@inproceedings{
zhao2024towards,
title={Towards Optimal Feature-Shaping Methods for Out-of-Distribution Detection},
author={Qinyu Zhao and Ming Xu and Kartik Gupta and Akshay Asthana and Liang Zheng and Stephen Gould},
booktitle={The Twelfth International Conference on Learning Representations},
year={2024},
url={https://openreview.net/forum?id=dm8e7gsH0d}
}

@InProceedings{Hassan_2026_CVPR,
    author    = {Hassan, Abid and Ngo, Tuan and Shafiq, Saad and Medvidovic, Nenad},
    title     = {Catalyst: Out-of-Distribution Detection via Elastic Scaling},
    booktitle = {Proceedings of the IEEE/CVF Conference on Computer Vision and Pattern Recognition (CVPR) Findings},
    month     = {June},
    year      = {2026},
    pages     = {1618-1628}
}

@article{hassan2026davis,
  title={DAVIS: OOD Detection via Dominant Activations and Variance for Increased Separation},
  author={Hassan, Abid and Ngo, Tuan and Shafiq, Saad and Medvidovic, Nenad},
  journal={arXiv preprint arXiv:2601.22703},
  year={2026}
}

@inproceedings{he2024exploring,
  title={Exploring channel-aware typical features for out-of-distribution detection},
  author={He, Rundong and Yuan, Yue and Han, Zhongyi and Wang, Fan and Su, Wan and Yin, Yilong and Liu, Tongliang and Gong, Yongshun},
  booktitle={Proceedings of the AAAI conference on artificial intelligence},
  volume={38},
  number={11},
  pages={12402--12410},
  year={2024}
}

@article{zhu2022boosting,
  title={Boosting out-of-distribution detection with typical features},
  author={Zhu, Yao and Chen, YueFeng and Xie, Chuanlong and Li, Xiaodan and Zhang, Rong and Xue, Hui and Tian, Xiang and Chen, Yaowu and others},
  journal={Advances in Neural Information Processing Systems},
  volume={35},
  pages={20758--20769},
  year={2022}
}

@inproceedings{wang2022vim,
  title={Vim: Out-of-distribution with virtual-logit matching},
  author={Wang, Haoqi and Li, Zhizhong and Feng, Litong and Zhang, Wayne},
  booktitle={Proceedings of the IEEE/CVF conference on computer vision and pattern recognition},
  pages={4921--4930},
  year={2022}
}

@InProceedings{Liu_2023_CVPR,
    author    = {Liu, Xixi and Lochman, Yaroslava and Zach, Christopher},
    title     = {GEN: Pushing the Limits of Softmax-Based Out-of-Distribution Detection},
    booktitle = {Proceedings of the IEEE/CVF Conference on Computer Vision and Pattern Recognition (CVPR)},
    month     = {June},
    year      = {2023},
    pages     = {23946-23955}
}

@inproceedings{ling2025cadref,
  title={Cadref: Robust out-of-distribution detection via class-aware decoupled relative feature leveraging},
  author={Ling, Zhiwei and Chang, Yachen and Zhao, Hailiang and Zhao, Xinkui and Chow, Kingsum and Deng, Shuiguang},
  booktitle={2025 IEEE/CVF Conference on Computer Vision and Pattern Recognition (CVPR)},
  pages={4968--4977},
  year={2025},
  organization={IEEE}
}

@article{deng2012mnist,
  title={The mnist database of handwritten digit images for machine learning research [best of the web]},
  author={Deng, Li},
  journal={IEEE signal processing magazine},
  volume={29},
  number={6},
  pages={141--142},
  year={2012},
  publisher={IEEE}
}

@inproceedings{netzer2011reading,
  title={Reading digits in natural images with unsupervised feature learning},
  author={Netzer, Yuval and Wang, Tao and Coates, Adam and Bissacco, Alessandro and Wu, Baolin and Ng, Andrew Y and others},
  booktitle={NIPS workshop on deep learning and unsupervised feature learning},
  volume={2011},
  number={2},
  pages={4},
  year={2011},
  organization={Granada}
}

@inproceedings{cimpoi2014describing,
  title={Describing textures in the wild},
  author={Cimpoi, Mircea and Maji, Subhransu and Kokkinos, Iasonas and Mohamed, Sammy and Vedaldi, Andrea},
  booktitle={Proceedings of the IEEE conference on computer vision and pattern recognition},
  pages={3606--3613},
  year={2014}
}

@article{zhou2017places,
  title={Places: A 10 million image database for scene recognition},
  author={Zhou, Bolei and Lapedriza, Agata and Khosla, Aditya and Oliva, Aude and Torralba, Antonio},
  journal={IEEE transactions on pattern analysis and machine intelligence},
  volume={40},
  number={6},
  pages={1452--1464},
  year={2017},
  publisher={IEEE}
}

@article{le2015tiny,
  title={Tiny imagenet visual recognition challenge},
  author={Le, Yann and Yang, Xuan and others},
  journal={CS 231N},
  volume={7},
  number={7},
  pages={3},
  year={2015}
}

@inproceedings{van2018inaturalist,
  title={The inaturalist species classification and detection dataset},
  author={Van Horn, Grant and Mac Aodha, Oisin and Song, Yang and Cui, Yin and Sun, Chen and Shepard, Alex and Adam, Hartwig and Perona, Pietro and Belongie, Serge},
  booktitle={Proceedings of the IEEE conference on computer vision and pattern recognition},
  pages={8769--8778},
  year={2018}
}

@inproceedings{vaze2021open,
  title={Open-set recognition: A good closed-set classifier is all you need},
  author={Vaze, Sagar and Han, Kai and Vedaldi, Andrea and Zisserman, Andrew},
  booktitle={International conference on learning representations},
  year={2021}
}

@inproceedings{bitterwolf2023or,
  title={In or Out? Fixing ImageNet Out-of-Distribution Detection Evaluation},
  author={Bitterwolf, Julian and M{\"u}ller, Maximilian and Hein, Matthias},
  booktitle={International Conference on Machine Learning},
  pages={2471--2506},
  year={2023},
  organization={PMLR}
}

@software{torchvision2016,
    title        = {TorchVision: PyTorch's Computer Vision library},
    author       = {TorchVision maintainers and contributors},
    year         = 2016,
    journal      = {GitHub repository},
    publisher    = {GitHub},
    howpublished = {\url{https://github.com/pytorch/vision}}
}

@misc{rw2019timm,
  author = {Ross Wightman},
  title = {PyTorch Image Models},
  year = {2019},
  publisher = {GitHub},
  journal = {GitHub repository},
  doi = {10.5281/zenodo.4414861},
  howpublished = {\url{https://github.com/rwightman/pytorch-image-models}}
}

@inproceedings{liu2022swin,
  title={Swin transformer v2: Scaling up capacity and resolution},
  author={Liu, Ze and Hu, Han and Lin, Yutong and Yao, Zhuliang and Xie, Zhenda and Wei, Yixuan and Ning, Jia and Cao, Yue and Zhang, Zheng and Dong, Li and others},
  booktitle={2022 IEEE/CVF Conference on Computer Vision and Pattern Recognition (CVPR)},
  pages={11999--12009},
  year={2022},
  organization={IEEE}
}

@inproceedings{liu2022convnet,
  title={A convnet for the 2020s},
  author={Liu, Zhuang and Mao, Hanzi and Wu, Chao-Yuan and Feichtenhofer, Christoph and Darrell, Trevor and Xie, Saining},
  booktitle={2022 IEEE/CVF conference on computer vision and pattern recognition (CVPR)},
  pages={11966--11976},
  year={2022},
  organization={IEEE}
}

@inproceedings{tan2019efficientnet,
  title={Efficientnet: Rethinking model scaling for convolutional neural networks},
  author={Tan, Mingxing and Le, Quoc},
  booktitle={International conference on machine learning},
  pages={6105--6114},
  year={2019},
  organization={PmLR}
}

@article{peng2022beit,
  title={Beit v2: Masked image modeling with vector-quantized visual tokenizers},
  author={Peng, Zhiliang and Dong, Li and Bao, Hangbo and Ye, Qixiang and Wei, Furu},
  journal={arXiv preprint arXiv:2208.06366},
  year={2022}
}

@inproceedings{touvron2022deit,
  title={Deit iii: Revenge of the vit},
  author={Touvron, Hugo and Cord, Matthieu and J{\'e}gou, Herv{\'e}},
  booktitle={European conference on computer vision},
  pages={516--533},
  year={2022},
  organization={Springer}
}

@inproceedings{sandler2018mobilenetv2,
  title={Mobilenetv2: Inverted residuals and linear bottlenecks},
  author={Sandler, Mark and Howard, Andrew and Zhu, Menglong and Zhmoginov, Andrey and Chen, Liang-Chieh},
  booktitle={2018 IEEE/CVF conference on computer vision and pattern recognition},
  pages={4510--4520},
  year={2018},
  organization={Ieee}
}

@article{kim2026mahavar,
  title={MahaVar: OOD Detection via Class-wise Mahalanobis Distance Variance under Neural Collapse},
  author={Kim, Donghwan and Yoon, Hyunsoo},
  journal={arXiv preprint arXiv:2605.14413},
  year={2026}
}

@article{paszke2019pytorch,
  title={Pytorch: An imperative style, high-performance deep learning library},
  author={Paszke, Adam and Gross, Sam and Massa, Francisco and Lerer, Adam and Bradbury, James and Chanan, Gregory and Killeen, Trevor and Lin, Zeming and Gimelshein, Natalia and Antiga, Luca and others},
  journal={Advances in neural information processing systems},
  volume={32},
  year={2019}
}
\bibliographystyle{iclr2027_conference}

\newpage
\appendix

\section{Proof of Lemma \ref{lem:maxlogit_energy}}
\label{appendix:proof}
\renewcommand{\thesection}{4}
\setcounter{theorem}{0}

\begin{lemma}
Let $\mathbf{W} \in \mathbb{R}^{C \times d}$ be the classifier head's weight, and let $\ell_k := \mathbf{W}_k \cdot \mathbf{z}$ with $c^* := \arg\max_k \ell_k$ the unique maximizer, $\ell_{c^*} > 0$. For simplicity, we assume the bias is zero. The scaled logit and energy score are
\begin{align}
\boldsymbol{\ell}' = \mathbf{W}\big(\exp(r)\cdot\mathbf{z}\big), \qquad S(\mathbf{x}) = T\cdot\log\sum_{k=1}^{C}\exp(\ell'_k/T),
\end{align}
which decomposes as
\begin{align}
S(\mathbf{x};r) = \ell'_{c^*}(r) + \xi(r), \qquad \xi(r) := T\log\Big(1+\sum_{k\neq c^*}\exp\big(\exp(r)(\ell_k-\ell_{c^*})/T\big)\Big) \ge 0,
\end{align}
where $\xi(r)$ is the contribution of the non-maximal logits. Let $\rho(r) := \ell'_{c^*}(r)/S(\mathbf{x};r)$ denote the fraction of the energy score attributable to the max logit. Then $\rho(r)$ is non-decreasing in $r$ with $\rho(r)\to1$ as $r\to\infty$.
\end{lemma}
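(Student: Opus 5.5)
The plan is to reparametrize by the scale $s := \exp(r) > 0$, which is strictly increasing in $r$. It then suffices to show that $\rho$, viewed as a function of $s$, is non-decreasing on $(0,\infty)$ and tends to $1$ as $s\to\infty$. With zero bias, $\ell'_k = s\,\ell_k$ for every $k$.

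\textbf{Step 1: the decomposition.} Factor $\exp(s\ell_{c^*}/T)$ out of the log-sum-exp:
\begin{align}
S = s\ell_{c^*} + T\log\Big(1+\sum_{k\neq c^*}\exp\big(s(\ell_k-\ell_{c^*})/T\big)\Big) = \ell'_{c^*} + \xi .
\end{align}
The argument of the logarithm is at least $1$, so $\xi \ge 0$.

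\textbf{Step 2: sign and rewriting of $\rho$.} Since $\ell_{c^*}>0$, we have $\ell'_{c^*} = s\ell_{c^*} > 0$, and hence $S \ge \ell'_{c^*} > 0$. This gives $\rho \in (0,1]$ and
\begin{align}
\rho = \frac{1}{1+\xi/(s\ell_{c^*})}.
\end{align}
So monotonicity of $\rho$ reduces to showing that $g(s) := \xi(s)/s$ is non-increasing, and the limit reduces to showing $g(s)\to 0$.

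\textbf{Step 3: monotonicity of $g$.} Set $\delta_k := \ell_{c^*}-\ell_k > 0$ for $k \neq c^*$; these are strictly positive by uniqueness of the maximizer. Then
\begin{align}
\xi(s) = T\log\Big(1+\sum_{k\neq c^*} e^{-s\delta_k/T}\Big).
\end{align}
Each $e^{-s\delta_k/T}$ is strictly decreasing in $s$, so $\xi$ is positive and non-increasing. Because $1/s$ is positive and decreasing, the product $g(s)=\xi(s)\cdot(1/s)$ of two non-negative non-increasing functions is non-increasing, and in fact strictly decreasing. It follows that $\rho$ is non-decreasing in $s$, and therefore in $r$.

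\textbf{Step 4: the limit.} As $s\to\infty$, each $e^{-s\delta_k/T}\to 0$, so $\xi(s)\to T\log 1 = 0$. Hence $g(s)\to 0$ and $\rho\to 1$ as $r\to\infty$.

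The only delicate point is Step 2. Writing $\rho = \ell'/(\ell'+\xi)$ is useful only because the hypothesis $\ell_{c^*}>0$ makes the numerator and denominator positive. If $\ell_{c^*}<0$, the ratio would be negative and the direction of monotonicity would flip. Uniqueness of $c^*$ is what gives $\delta_k>0$, which is needed for $\xi\to 0$ rather than $\xi \to T\log(\text{number of ties})$. Beyond that the argument is routine and needs no differentiation.
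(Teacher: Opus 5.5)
Your proof is correct and follows essentially the same route as the paper. Both use the same log-sum-exp factorization and the same rewriting $\rho=(1+\xi/\ell'_{c^*})^{-1}$, and both conclude from $\xi\ge 0$ being non-increasing while $\ell'_{c^*}>0$ is increasing and unbounded. The only difference is cosmetic: you obtain monotonicity of $\xi$ termwise in $s=\exp(r)$, whereas the paper differentiates $\xi$ with respect to $r$.
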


\renewcommand{\thesection}{\Alph{section}}

\begin{proof}
By the shift-invariance of log-sum-exp,
\begin{align}
S(\mathbf{x};r) &= T\log\sum_{k=1}^{C}\exp(\ell'_k(r)/T) \\
&= T\log\left[\exp\!\left(\frac{\ell'_{c^*}(r)}{T}\right)\sum_{k=1}^{C}\exp\!\left(\frac{\ell'_k(r)-\ell'_{c^*}(r)}{T}\right)\right] \\
&= \ell'_{c^*}(r) + T\log\sum_{k=1}^{C}\exp\!\left(\frac{\ell'_k(r)-\ell'_{c^*}(r)}{T}\right) \\
&= \ell'_{c^*}(r) + T\log\Big(1+\sum_{k\neq c^*}\exp\big((\ell'_k(r)-\ell'_{c^*}(r))/T\big)\Big) =: \ell'_{c^*}(r) + \xi(r),
\end{align}
where the last line separates out the $k=c^*$ term, for which $\exp(0)=1$. Since $\ell'_k(r)=\exp(r)\ell_k$, we have $\ell'_k(r)-\ell'_{c^*}(r)=\exp(r)\delta_k$, where $\delta_k := \ell_k-\ell_{c^*} \le 0$ for all $k$ by the definition of $c^*$. Thus
\begin{align}
\xi(r) = T\log\Big(1+\sum_{k\neq c^*}\exp(\exp(r)\delta_k/T)\Big).
\end{align}
Let $g(r) := 1+\sum_{k\neq c^*}\exp(\exp(r)\delta_k/T)$, so that $\xi(r) = T\log g(r)$. Differentiating $g(r)$ with respect to $r$,
\begin{align}
g'(r) = \sum_{k\neq c^*}\frac{\delta_k\exp(r)}{T}\exp(\exp(r)\delta_k/T),
\end{align}
so that
\begin{align}
\xi'(r) = T\cdot\frac{g'(r)}{g(r)} = \frac{\exp(r)\sum_{k\neq c^*}\delta_k\exp(\exp(r)\delta_k/T)}{g(r)}.
\end{align}
The denominator $g(r)$ is strictly positive. In the numerator, $\exp(r)>0$ and each term $\delta_k\exp(\exp(r)\delta_k/T)\le0$ since $\delta_k\le0$. Hence $\xi'(r)\le0$, so $\xi(r)$ is non-increasing in $r$. Assuming $c^*$ is the unique maximizer, $\delta_k<0$ for all $k\neq c^*$, so $\exp(r)\delta_k\to-\infty$ as $r\to\infty$. Thus $\exp(\exp(r)\delta_k/T)\to0$ for every $k\neq c^*$, and $g(r)\to1$, giving
\begin{align}
\xi(r) \xrightarrow{r\to\infty} T\log(1) = 0.
\end{align}
Since $\ell_{c^*}>0$, $\ell'_{c^*}(r)=\exp(r)\ell_{c^*}$ is strictly increasing in $r$ and diverges as $r\to\infty$. As $\xi(r) \ge 0$ is non-increasing and bounded while $\ell'_{c^*}(r) > 0$ is strictly increasing and unbounded, the ratio $\xi(r)/\ell'_{c^*}(r)$ is non-increasing in $r$ and tends to $0$ as $r\to\infty$. Writing
\begin{align}
\rho(r) = \frac{\ell'_{c^*}(r)}{\ell'_{c^*}(r)+\xi(r)} = \Big(1+\frac{\xi(r)}{\ell'_{c^*}(r)}\Big)^{-1},
\end{align}
it follows that $\rho(r)$ is non-decreasing in $r$ and $\rho(r)\to1$ as $r\to\infty$.
\end{proof}

\section{Additional Ablations on the Absolute Fix}
\label{appendix:absolute_fix}

Figure~\ref{fig:qqp_abs_before_after} extends the analysis of Figure~\ref{fig:raw_activation_main} by visualizing how $r$ differs before and after applying the Absolute Fix across six backbones with negative penultimate activations. Consistent with the main text, the Absolute Fix shifts $r$ so that ID samples exhibit a higher value than OOD samples on all backbones except DeiT3-B, whereas the original $r$ frequently fails to separate ID from OOD or even reverses this ordering. This further confirms that the Absolute Fix restores the intended behavior of the scaling factor across a broader range of architectures with negative penultimate activations. We additionally examined SwinV2-B and found that, although the Absolute Fix separates ID from OOD even more clearly than on Swin-B, the qualitative pattern does not differ from Swin-B, and we therefore omit it from Figure \ref{fig:qqp_abs_before_after} for brevity.

\begin{figure}[!h]
\centering
\includegraphics[width=0.95\textwidth]{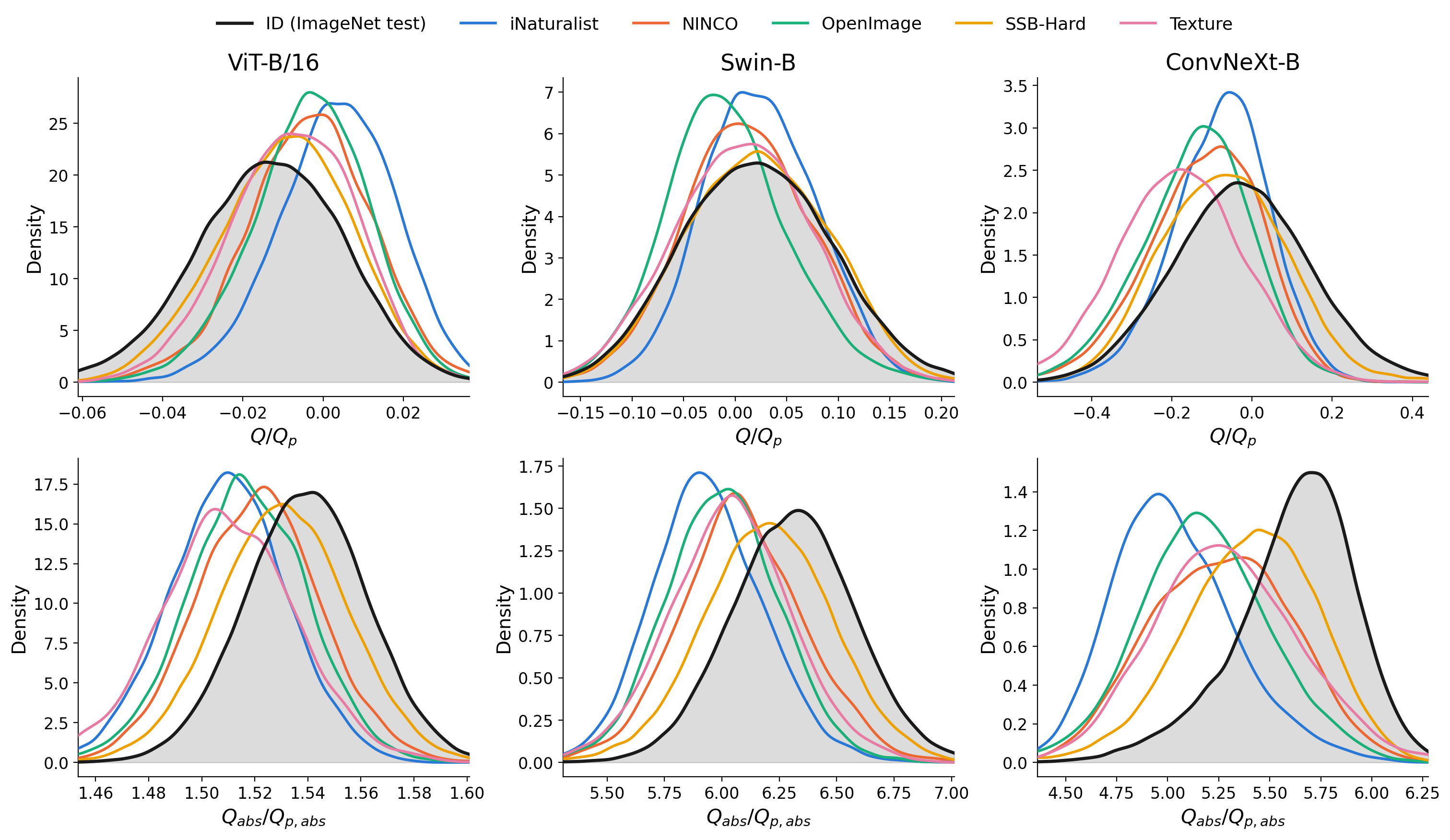}
\\[0.5em]
\includegraphics[width=0.95\textwidth]{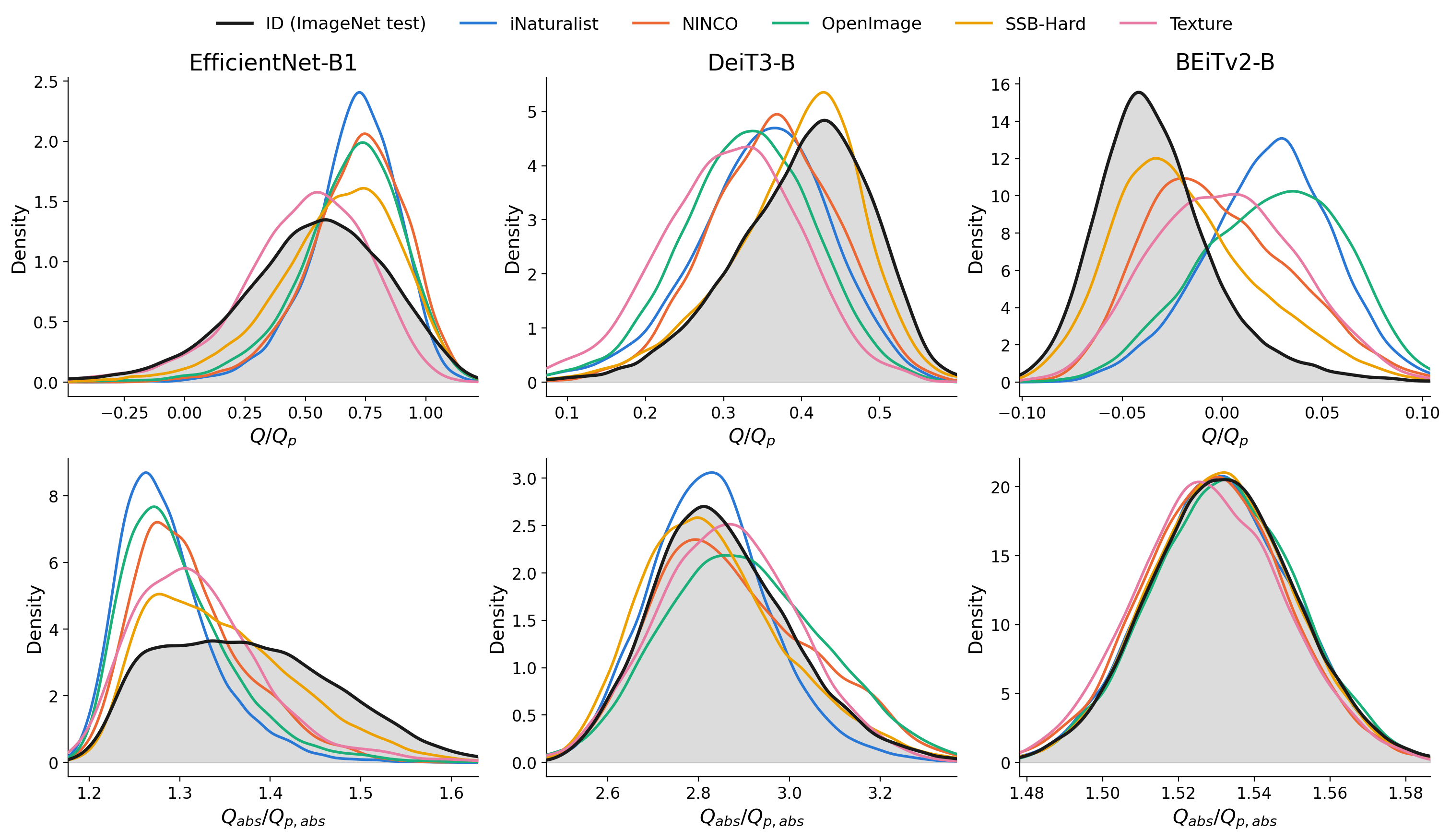}
\caption{Effect of the Absolute Fix on the separability of $r$ across six backbones with negative penultimate activations. In each panel, the top row shows the original ratio $r=Q/Q_p$ and the bottom row shows $r_{\text{abs}}=Q_{\text{abs}}/Q_{p,\text{abs}}$ after applying the Absolute Fix. The top block covers ViT-B/16, Swin-B, and ConvNeXt-B; the bottom block covers EfficientNet-B1, DeiT3-B, and BEiTv2-B.}
\label{fig:qqp_abs_before_after}
\end{figure}

Table~\ref{tab:abs_fix_ablation_performance} compares the Absolute Fix against Energy, SCALE, and SCALE+ReLU to verify its performance gain on a per-dataset basis. Across all seven backbones, SCALE+Abs achieves the best average AUROC and FPR@95, except on DeiT3-B, where its average AUROC ties with the original SCALE. On several backbones, most notably SwinV2-B, BEiTv2-B and EfficientNet-B1, the original SCALE performs worse than the plain Energy baseline; on SwinV2-B the degradation is most severe, with SCALE trailing Energy by over 21 points in average AUROC. On BEiTv2-B, SCALE+ReLU likewise fails to outperform Energy. These results underscore the need for a correction on architectures with negative penultimate activations, and validate that the Absolute Fix consistently yields a genuine improvement even in cases where SCALE or the ReLU-based correction fails to surpass the basic Energy baseline.

\begin{table}[!h]
\centering
\caption{Comparison of the Absolute Fix against Energy and SCALE baselines on backbones with negative penultimate activations, evaluated under the OpenOOD v1.5 protocol on ImageNet. Results are reported as AUROC/FPR@95 (\%); the best value in each column within each backbone block is highlighted in \textbf{bold}. Notably, on some backbones the original SCALE underperforms the plain Energy baseline, motivating the need for a correction.}
\label{tab:abs_fix_ablation_performance}
\resizebox{\textwidth}{!}{%
\begin{tabular}{llcccccc}
\toprule
\multirow{2}{*}{Backbone} & \multirow{2}{*}{Method} & \multicolumn{2}{c}{Near-OOD} & \multicolumn{3}{c}{Far-OOD} & \multirow{2}{*}{Avg} \\
\cmidrule(lr){3-4} \cmidrule(lr){5-7}
 & & NINCO & SSB-Hard & iNaturalist & Texture & OpenImage & \\
\midrule
\multirow{4}{*}{Swin-B}
 & Energy (Baseline) & 64.28/79.82 & 59.27/86.00 & 67.79/73.19 & 69.28/66.29 & 60.40/76.75 & 64.20/76.41 \\
 & SCALE              & 62.51/89.69 & 55.97/92.86 & 63.10/91.18 & 69.60/79.27 & 66.71/84.48 & 63.58/87.50 \\
 & SCALE+ReLU         & 73.90/76.51 & 63.09/\textbf{85.74} & 83.81/60.23 & 78.65/\textbf{65.18} & 80.83/66.83 & 76.06/70.90 \\
 & SCALE+Abs (Ours)   & \textbf{75.08}/\textbf{74.94} & \textbf{63.21}/86.24 & \textbf{86.82}/\textbf{53.16} & \textbf{80.08}/65.27 & \textbf{82.35}/\textbf{65.52} & \textbf{77.51}/\textbf{69.03} \\
\midrule
\multirow{4}{*}{SwinV2-B}
 & Energy (Baseline) & 72.14/72.88 & 66.41/81.83 & 77.87/62.67 & 76.25/\textbf{59.72} & 71.44/67.08 & 72.82/68.84 \\
 & SCALE              & 49.15/91.66 & 44.58/95.98 & 50.23/90.69 & 64.20/78.99 & 48.04/89.70 & 51.24/89.40 \\
 & SCALE+ReLU         & 74.10/74.45 & 64.13/83.43 & 88.81/44.82 & 75.47/69.31 & 81.10/62.42 & 76.72/66.88 \\
 & SCALE+Abs (Ours)   & \textbf{79.80}/\textbf{67.88} & \textbf{67.52}/\textbf{80.80} & \textbf{92.59}/\textbf{35.20} & \textbf{79.09}/68.19 & \textbf{86.24}/\textbf{56.52} & \textbf{81.05}/\textbf{61.72} \\
\midrule
\multirow{4}{*}{ViT-B/16}
 & Energy (Baseline) & 66.05/78.45 & 59.16/88.16 & 79.26/64.06 & 79.32/58.46 & 76.49/65.05 & 72.05/70.83 \\
 & SCALE              & 61.34/79.94 & 56.78/88.75 & 73.62/69.13 & 77.31/59.36 & 72.51/67.33 & 68.31/72.90 \\
 & SCALE+ReLU         & 74.55/73.15 & 64.17/85.46 & 87.93/48.63 & 84.08/53.32 & 82.91/59.14 & 78.73/63.94 \\
 & SCALE+Abs (Ours)   & \textbf{78.37}/\textbf{70.60} & \textbf{65.77}/\textbf{84.90} & \textbf{90.63}/\textbf{43.63} & \textbf{87.05}/\textbf{49.04} & \textbf{86.11}/\textbf{57.35} & \textbf{81.59}/\textbf{61.10} \\
\midrule
\multirow{4}{*}{ConvNeXt-B}
 & Energy (Baseline) & 64.43/82.68 & 65.67/84.36 & 55.97/90.24 & 49.12/88.40 & 52.95/89.38 & 57.63/87.01 \\
 & SCALE              & 67.77/90.66 & 61.35/92.97 & 57.62/98.73 & 69.53/86.10 & 65.35/92.05 & 64.32/92.10 \\
 & SCALE+ReLU         & \textbf{81.77}/\textbf{65.21} & 70.48/\textbf{80.44} & 90.06/45.95 & 72.26/74.10 & 83.54/63.28 & 79.62/65.80 \\
 & SCALE+Abs (Ours)   & 81.07/67.27 & \textbf{72.27}/81.64 & \textbf{91.86}/\textbf{39.35} & \textbf{76.81}/\textbf{70.30} & \textbf{85.25}/\textbf{61.85} & \textbf{81.45}/\textbf{64.08} \\
\midrule
\multirow{4}{*}{DeiT3-B}
 & Energy (Baseline) & 74.91/82.65 & 68.70/87.96 & 84.76/67.37 & 80.56/63.05 & 79.63/74.06 & 77.71/75.02 \\
 & SCALE              & 77.59/75.48 & 66.36/86.23 & 87.45/53.05 & \textbf{85.77}/\textbf{51.60} & \textbf{85.27}/59.08 & \textbf{80.49}/65.09 \\
 & SCALE+ReLU         & 76.53/75.59 & \textbf{70.63}/83.54 & 88.55/50.64 & 81.58/61.60 & 83.65/59.96 & 80.19/66.26 \\
 & SCALE+Abs (Ours)   & \textbf{78.20}/\textbf{72.03} & 69.18/\textbf{83.12} & \textbf{89.03}/\textbf{45.50} & 82.25/54.57 & 83.81/\textbf{57.43} & \textbf{80.49}/\textbf{62.53} \\
\midrule
\multirow{4}{*}{EfficientNet-B1}
 & Energy (Baseline) & 66.12/87.77 & 63.51/88.10 & 70.54/87.57 & 61.84/89.45 & 70.99/83.52 & 66.60/87.28 \\
 & SCALE              & 43.47/99.52 & 48.47/97.87 & 49.45/99.59 & 62.85/93.87 & 49.71/98.48 & 50.79/97.87 \\
 & SCALE+ReLU         & 63.63/82.78 & 62.82/\textbf{85.00} & 71.70/72.73 & 68.81/73.32 & 69.53/73.88 & 67.30/77.54 \\
 & SCALE+Abs (Ours)   & \textbf{77.73}/\textbf{76.13} & \textbf{65.92}/85.79 & \textbf{90.15}/\textbf{44.25} & \textbf{83.76}/\textbf{53.67} & \textbf{86.90}/\textbf{54.12} & \textbf{80.89}/\textbf{62.79} \\
\midrule
\multirow{4}{*}{BEiTv2-B}
 & Energy (Baseline) & 74.76/69.28 & 67.64/80.40 & 86.88/42.58 & 82.65/50.12 & 81.68/52.76 & 78.72/59.03 \\
 & SCALE              & 62.86/79.94 & 60.79/84.78 & 72.30/66.23 & 73.88/63.85 & 64.38/74.96 & 66.84/73.95 \\
 & SCALE+ReLU         & 72.13/70.55 & 65.16/82.17 & 82.41/51.07 & 79.70/52.71 & 77.52/58.28 & 75.38/62.96 \\
 & SCALE+Abs (Ours)   & \textbf{78.40}/\textbf{64.29} & \textbf{69.57}/\textbf{78.63} & \textbf{88.97}/\textbf{37.39} & \textbf{84.25}/\textbf{48.07} & \textbf{84.68}/\textbf{47.33} & \textbf{81.17}/\textbf{55.14} \\
\bottomrule
\end{tabular}%
}
\end{table}

\clearpage

\section{Visualization of the Anti-coherence on ReLU-based Backbones}
\label{appendix:vis_relu}

Figure~\ref{fig:anti_coh_relu} visualizes the Anti-coherence term $A(\mathbf{z})$ on ReLU-based backbones where OOD samples also tend to exhibit higher $A(\mathbf{z})$ values than ID samples. Nevertheless the performance gain over SCALE is comparatively modest on these backbones (see Table~\ref{tab:scale_absfix_anticoh_progression}). We attribute this to the fact that the scaling factor already provides sufficient separability under the ImageNet ID setting as discussed in Section~\ref{subsec:main_results}. Hence, on backbones with only non-negative activations there is little room for the Anti-coherence term to contribute additional signal.

\begin{figure}[!h]
\centering
\includegraphics[width=\textwidth]{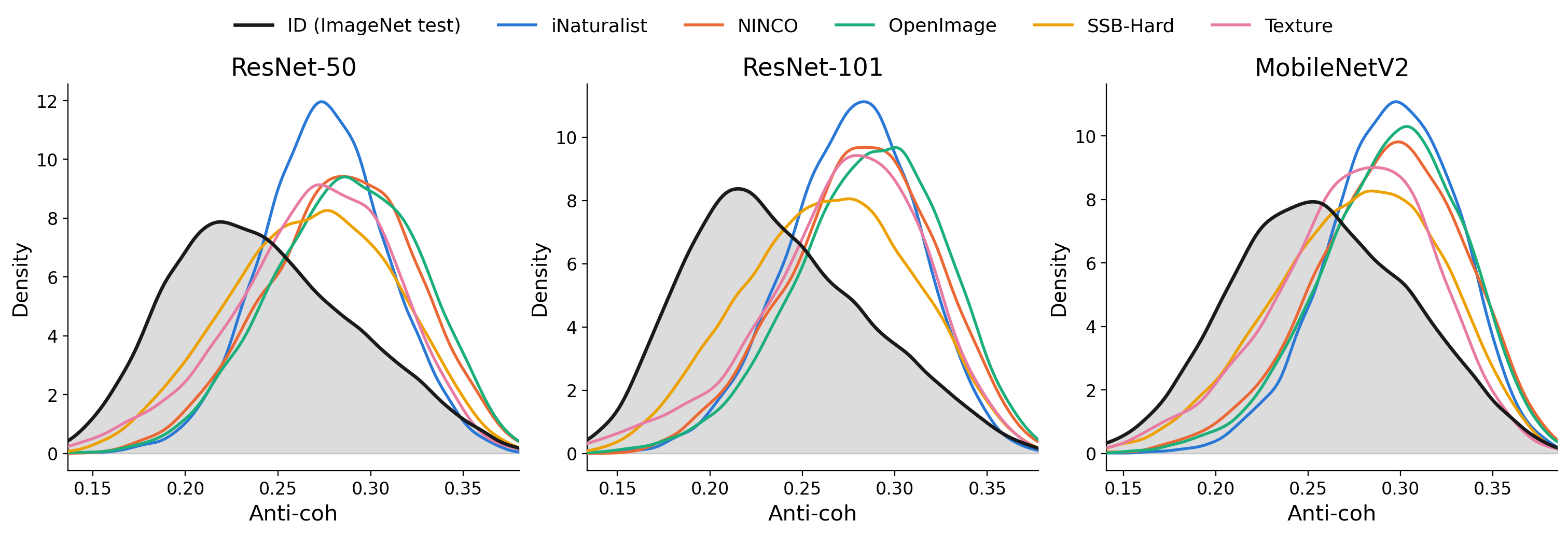}
\caption{Distribution of the Anti-coherence term $A(z)$ for ID and OOD samples on backbones with non-negative penultimate activations due to ReLU.}
\label{fig:anti_coh_relu}
\end{figure}

\section{Comparison with Representation-based Methods}
\label{appendix:Rep_method}

Table~\ref{tab:vs_representation} compares our method against representation-based methods that require access to training statistics. On architectures with non-negative activations (ResNet-50, ResNet-101, and MobileNetV2), our training-data-free method already achieves the best performance among all methods compared. On architectures with negative penultimate activations, however, our method still trails the strongest representation-based methods, most notably CADRef and Mahalanobis. This suggests that activation-shaping methods including ours remain fundamentally strongest on architectures such as ResNet, and while our use of sign information yields substantial improvements it still falls short of representation-based methods on architectures with negative activations. Therefore, closing this gap on negative-activation architectures without relying on training data remains an important direction for future activation-shaping methods.

\begin{table}[h]
\centering
\caption{OOD detection performance (AUROC/FPR@95, \%) of Anti-coh (Ours) against representation-based methods that require training statistics, averaged over five OOD datasets on ImageNet. The best value in each column is highlighted in \textbf{bold}.}
\label{tab:vs_representation}
\resizebox{\textwidth}{!}{%
\begin{tabular}{lcccccccccc}
\toprule
Method & ResNet-50 & ResNet-101 & Swin-B & SwinV2-B & ViT-B/16 & ConvNeXt-B & DeiT3-B & EfficientNet-B1 & MobileNetV2 & BEiTv2-B \\
\midrule
Mahalanobis & 66.58/83.64 & 72.49/78.77 & 86.74/56.72 & \textbf{87.99}/\textbf{50.26} & \textbf{86.77}/\textbf{48.91} & 87.30/53.67 & 86.76/56.74 & 83.44/65.57 & 51.56/90.07 & 88.67/46.06 \\
k-NN        & 81.69/59.76 & 83.83/55.75 & 85.67/60.38 & 86.07/62.60 & 84.01/64.05 & 86.61/59.82 & 83.69/71.19 & 82.16/68.94 & 76.18/69.52 & 87.83/48.82 \\
NCI         & 84.31/61.82 & 84.89/60.49 & 84.00/62.16 & 85.17/58.20 & 82.29/66.46 & 85.76/58.07 & 83.48/62.28 & \textbf{84.23}/68.59 & 79.67/71.95 & 86.56/50.86 \\
fDBD        & 85.96/58.40 & 86.44/57.01 & 84.06/62.33 & 85.42/57.24 & 82.51/65.62 & 86.40/55.98 & 85.06/58.33 & 82.90/70.35 & 84.00/63.48 & 86.28/51.22 \\
NNGuide     & 86.50/50.15 & 87.59/48.21 & 86.74/56.67 & 86.86/58.60 & 84.56/61.95 & 86.92/58.20 & 82.09/65.57 & 81.68/70.12 & 82.91/58.63 & 88.31/45.95 \\
CADRef      & 89.43/40.71 & 89.67/41.02 & \textbf{87.24}/\textbf{54.27} & 87.85/51.07 & 86.24/57.89 & \textbf{88.07}/\textbf{52.20} & \textbf{87.28}/\textbf{52.87} & 79.35/76.31 & 86.60/47.91 & \textbf{89.47}/\textbf{42.12} \\
Anti-coh (Ours) & \textbf{90.65}/\textbf{38.65} & \textbf{90.62}/\textbf{38.46} & 83.29/61.39 & 85.06/56.00 & 82.92/63.18 & 86.32/57.33 & 83.44/57.21 & 82.52/\textbf{57.48} & \textbf{87.27}/\textbf{46.10} & 86.00/51.82 \\
\bottomrule
\end{tabular}%
}
\end{table}

\clearpage

\section{Experiments of Hyperparameter Sensitivity}
\label{appendix:hyp}

Figure~\ref{fig:hyperparam_sensitivity} shows how AUROC on iNaturalist, OpenImage, and Texture responds to varying $\lambda$ across six backbones. On the four backbones with substantial negative activation mass (Swin-B, ViT-B/16, ConvNeXt-B, and BEiTv2-B) AUROC increases with $\lambda$ and then changes only gradually, indicating that our method is not overly sensitive to the exact choice of $\lambda$ within this range. On EfficientNet-B1 and MobileNetV2 whose penultimate activations are predominantly non-negative, AUROC peaks at a moderate $\lambda$ and then declines, most notably on Texture. This suggests that excessive regularization can be detrimental on backbones dominated by positive activations. 

\begin{figure}[!h]
\centering
\includegraphics[width=\textwidth]{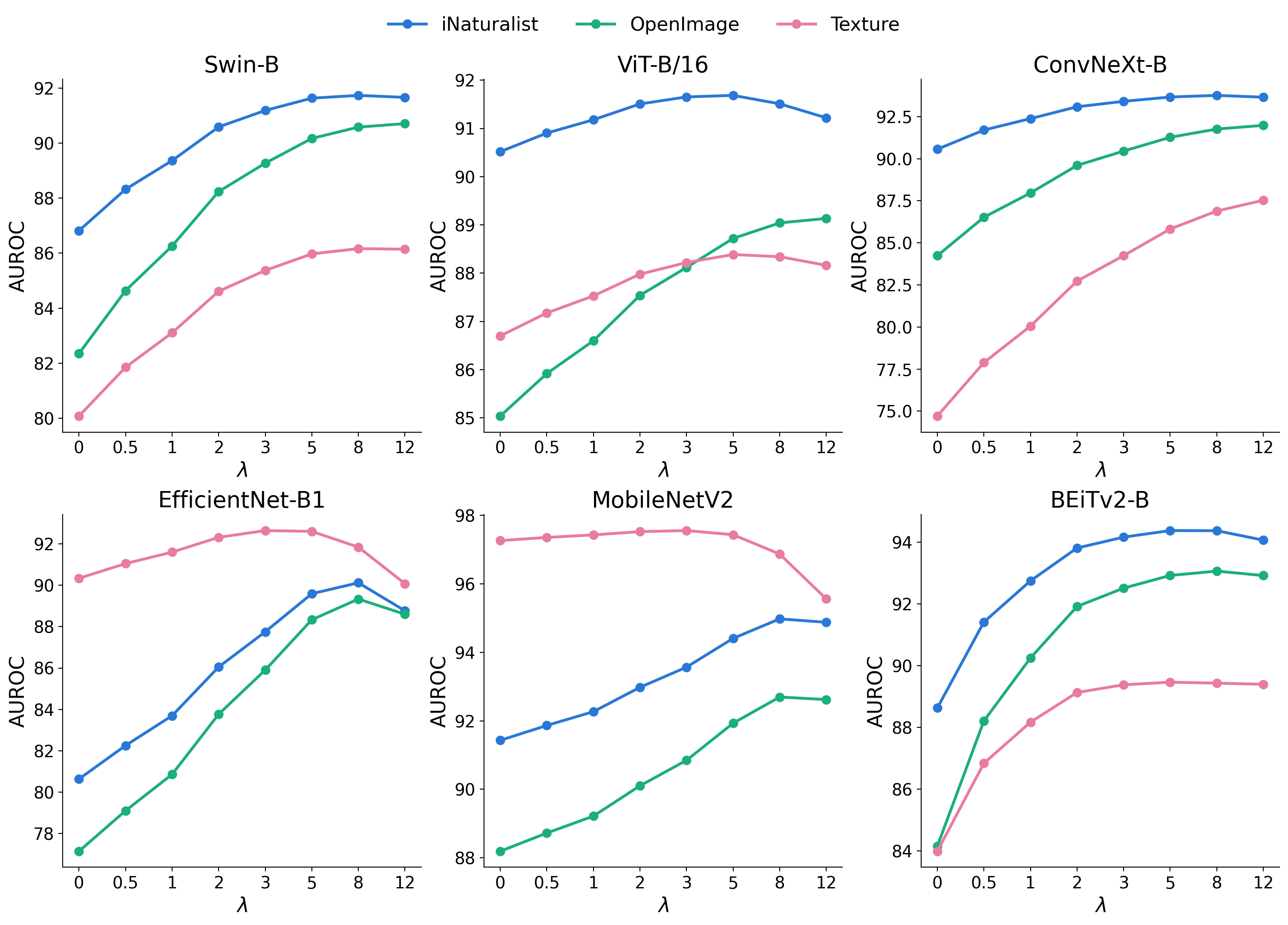}
\caption{Sensitivity of AUROC to the Anti-coherence regularization strength $\lambda$ across six backbones. Swin-B, ViT-B/16, ConvNeXt-B, and BEiTv2-B have substantial negative activation mass whereas EfficientNet-B1 and MobileNetV2 have predominantly non-negative penultimate activations. The percentile $p$ is fixed to the value selected in Table~\ref{tab:hyperparam_selected} for all $\lambda$ and $\lambda = 0$ corresponds to SCALE+Abs at this percentile.}
\label{fig:hyperparam_sensitivity}
\end{figure}

\clearpage

\section{Implementation Details}
\label{appendix:implementation}

\subsection{Method Implementations}

All methods are implemented in PyTorch 2.4.1~\citep{paszke2019pytorch}, and all experiments are conducted on a machine equipped with an AMD Ryzen 9 7950X CPU and an NVIDIA RTX 4090 GPU. Below we describe our implementation of each compared method, along with its hyperparameters and how they are set.

\textbf{MSP.} MSP~\citep{hendrycks2017a} scores a sample by the largest post-softmax class probability. Using the logit $\boldsymbol{\ell} = \mathbf{W}\phi(\mathbf{x}) + \mathbf{b}$, where $\mathbf{W} \in \mathbb{R}^{C \times d}$ and $\mathbf{b} \in \mathbb{R}^C$ denote the classifier head weight matrix and bias vector, the score is given by
\begin{equation}
    S(\mathbf{x}) = \max_c\, \mathrm{softmax}(\boldsymbol{\ell})_c.
\end{equation}

\textbf{MaxLogit.} MaxLogit~\citep{hendrycks2022scaling} instead takes the maximum component of the raw logit directly, without the softmax normalization:
\begin{equation}
    S(\mathbf{x}) = \max_c\, \ell_c.
\end{equation}

\textbf{Energy.} Rather than relying on a single class, the energy score~\citep{liu2020energy} aggregates all logit components through a log-sum-exp:
\begin{equation}
    S(\mathbf{x}) = T \cdot \log \sum_c \exp(\ell_c/T),
\end{equation}
where the temperature $T$ is fixed to $1$ throughout.

\textbf{ASH-S.} ASH-S~\citep{djurisic2023extremely} prunes activations below a threshold $\tau$ and applies exponential rescaling to the surviving components:
\begin{equation}
    \phi_{\mathrm{ASH}}(\mathbf{x}) = \phi(\mathbf{x}) \cdot \mathbf{1}[\phi(\mathbf{x}) > \tau] \cdot \exp\!\left(\frac{\sum_i \phi_i(\mathbf{x})}{\sum_{i:\, \phi_i(\mathbf{x}) > \tau} \phi_i(\mathbf{x})}\right),
\end{equation}
where $\tau = P_p(\phi(\mathbf{x}))$ denotes the $p$-th percentile of $\phi(\mathbf{x})$, with $p$ as the sole hyperparameter. The shaped activation is then passed through the classifier head and scored using the energy score,
\begin{equation}
    S(\mathbf{x}) = T \cdot \log \sum_c \exp\!\left(\frac{(\mathbf{W}\phi_{\mathrm{ASH}}(\mathbf{x}) + \mathbf{b})_c}{T}\right).
\end{equation}

\textbf{SCALE.} SCALE~\citep{xu2024scaling} applies the same exponential rescaling as ASH-S but without the pruning step:
\begin{equation}
    \phi_{\mathrm{SCALE}}(\mathbf{x}) = \phi(\mathbf{x}) \cdot \exp\!\left(\frac{\sum_i \phi_i(\mathbf{x})}{\sum_{i:\, \phi_i(\mathbf{x}) > \tau} \phi_i(\mathbf{x})}\right),
\end{equation}
where $\tau = P_p(\phi(\mathbf{x}))$ is again the $p$-th percentile of $\phi(\mathbf{x})$, with $p$ as the sole hyperparameter. The shaped activation is then passed through the classifier head and scored using the energy score,
\begin{equation}
    S(\mathbf{x}) = T \cdot \log \sum_c \exp\!\left(\frac{(\mathbf{W}\phi_{\mathrm{SCALE}}(\mathbf{x}) + \mathbf{b})_c}{T}\right).
\end{equation}

\textbf{LTS.} LTS~\citep{djurisic2025logit} rectifies the penultimate activation with ReLU before computing SCALE's percentile ratio, and applies the square of the resulting factor to the logit rather than to the activation itself, with the percentile threshold $p$ serving as the sole hyperparameter. With $\phi_+(\mathbf{x}) = \mathrm{ReLU}(\phi(\mathbf{x}))$, the scaling factor is
\begin{equation}
    r_{\mathrm{LTS}} = \frac{\sum_i \phi_{+,i}(\mathbf{x})}{\sum_{i:\, \phi_{+,i}(\mathbf{x}) > P_p(\phi_+(\mathbf{x}))} \phi_{+,i}(\mathbf{x})},
\end{equation}
where $P_p(\phi_+(\mathbf{x}))$ denotes the $p$-th percentile of $\phi_+(\mathbf{x})$. The original, unrectified activation is passed through the classifier head to obtain the logit $\boldsymbol{\ell} = \mathbf{W}\phi(\mathbf{x}) + \mathbf{b}$, which is then scaled by $r_{\mathrm{LTS}}^2$ and scored using the energy score,
\begin{equation}
    S(\mathbf{x}) = T \cdot \log \sum_c \exp\!\left(\frac{r_{\mathrm{LTS}}^2 \cdot \ell_c}{T}\right).
\end{equation}

\textbf{ReAct.} ReAct~\citep{sun2021react} clips penultimate activations at a threshold $\tau$ set to the $p$-th percentile of the training features, prior to energy scoring. With the clipped logit $\boldsymbol{\ell} = \mathbf{W}\,\mathrm{clip}(\phi(\mathbf{x}), \tau) + \mathbf{b}$, the score is given by
\begin{equation}
    S(\mathbf{x}) = T \cdot \log \sum_c \exp(\ell_c/T),
\end{equation}
where $\mathrm{clip}(\phi(\mathbf{x}), \tau)$ truncates each activation at $\tau$ from above, and the percentile $p$ is the sole hyperparameter.

\textbf{VRA.} VRA~\citep{xu2023vra} rectifies each activation with a two-sided piecewise function, suppressing values below a lower bound $\alpha$ and clipping values above an upper bound $\beta$:
\begin{equation}
    \phi_{\mathrm{VRA}}(\mathbf{x})_i =
    \begin{cases}
        0 & \phi_i(\mathbf{x}) < \alpha \\
        \phi_i(\mathbf{x}) & \alpha \leq \phi_i(\mathbf{x}) \leq \beta \\
        \beta & \phi_i(\mathbf{x}) > \beta
    \end{cases}.
\end{equation}
The rectified activation is then passed through the classifier head and scored using the energy score,
\begin{equation}
    S(\mathbf{x}) = T \cdot \log \sum_c \exp\!\left(\frac{(\mathbf{W}\phi_{\mathrm{VRA}}(\mathbf{x}) + \mathbf{b})_c}{T}\right).
\end{equation}
Following the original implementation, $\alpha$ and $\beta$ are set to the $\eta_\alpha$- and $\eta_\beta$-quantiles of the pooled ID training activations, respectively, with $\eta_\alpha$ and $\eta_\beta$ selected as hyperparameters subject to $\eta_\alpha < \eta_\beta$.

\textbf{OptFS.} OptFS~\citep{zhao2024towards} learns a closed-form scaling vector from ID training statistics and applies it to the penultimate activation. Activations are grouped into $K$ equal-width bins over $[\alpha, \beta)$ where $\alpha$ and $\beta$ are the $0.1$-th and $99.9$-th percentile of the pooled ID training activations and $\mathrm{bin}(z)$ denotes the index of the bin containing value $z$. For each bin $k$ an importance score $I_k$ is computed using the ID training set as the average of $W_{c^*, i}\, \phi_i(\mathbf{x})$ over all components $i$ falling in that bin where $c^* = \arg\max_c (\mathbf{W}\phi(\mathbf{x}) + \mathbf{b})_c$ is the predicted class from the original unscaled logit. The scaling vector $\boldsymbol{\theta} \in \mathbb{R}^K$ is then obtained by normalizing $I \in \mathbb{R}^K$ in closed form as $\boldsymbol{\theta} = \sqrt{K}\, I / \|I\|_2$. At test time each activation is rescaled by the factor of its own bin,
\begin{equation}
    \phi_{\mathrm{OptFS}}(\mathbf{x})_i = \theta_{\mathrm{bin}(\phi_i(\mathbf{x}))} \cdot \phi_i(\mathbf{x}) \cdot \mathbf{1}[\alpha \leq \phi_i(\mathbf{x}) < \beta],
\end{equation}
and the OOD score is given by $S(\mathbf{x}) = \mathbf{W}_{c^*} \cdot \phi_{\mathrm{OptFS}}(\mathbf{x})$ where $c^*$ is again determined from the original unscaled logit. The bias term is omitted from the score following the official implementation. The number of bins $K$ is the sole hyperparameter while $\boldsymbol{\theta}$ is computed in closed form from ID training statistics without further tuning.

\textbf{LAPS.} LAPS~\citep{he2024exploring} clips each channel of the penultimate activation to a channel-specific asymmetric range, calibrated from how far that channel's ID training statistics deviate from the average across all channels. Let $\mu_i$ and $\sigma_i$ denote the per-channel mean and standard deviation over the ID training features, and let $\bar{\mu}$ and $\bar{\sigma}$ denote their averages across all channels. The clamping bounds for channel $i$ are
\begin{equation}
    \lambda_i^{(1)} = \lambda + m(\bar{\mu}-\mu_i) + n(\bar{\sigma}-\sigma_i), \qquad \lambda_i^{(2)} = \lambda - m(\bar{\mu}-\mu_i) + n(\bar{\sigma}-\sigma_i),
\end{equation}
and each activation is clipped as
\begin{equation}
    \phi_{\mathrm{LAPS}}(\mathbf{x})_i =
    \begin{cases}
        \mu_i - \lambda_i^{(2)}\sigma_i & \phi_i(\mathbf{x}) < \mu_i - \lambda_i^{(2)}\sigma_i \\
        \phi_i(\mathbf{x}) & \mu_i - \lambda_i^{(2)}\sigma_i \le \phi_i(\mathbf{x}) \le \mu_i + \lambda_i^{(1)}\sigma_i \\
        \mu_i + \lambda_i^{(1)}\sigma_i & \phi_i(\mathbf{x}) > \mu_i + \lambda_i^{(1)}\sigma_i
    \end{cases}.
\end{equation}
The clipped activation is then passed through the classifier head and scored using the energy score,
\begin{equation}
    S(\mathbf{x}) = T \cdot \log \sum_c \exp\!\left(\frac{(\mathbf{W}\phi_{\mathrm{LAPS}}(\mathbf{x}) + \mathbf{b})_c}{T}\right).
\end{equation}
The base level $\lambda$ and the deviation weights $m$ and $n$ are jointly tuned as hyperparameters.

\textbf{AdaSCALE-A/L.} AdaSCALE~\citep{regmi2026adascale} determines the percentile threshold used in SCALE's ratio adaptively per sample, by perturbing the input along a gradient direction and using the resulting shift in the pooled feature as an estimate of how OOD-like that sample is. This estimate is compared against a reference distribution built from the ID validation set to obtain a sample-specific percentile, which is then used in place of SCALE's fixed percentile. AdaSCALE-A applies the resulting factor to the feature before recomputing the logit, while AdaSCALE-L instead scales the original logit by the square of the factor; both are then scored using the energy score. The method has hyperparameters $\lambda$, $k_1$, $k_2$, $o$, and the percentile range $[p_{\min}, p_{\max}]$, which we set following Section 5.2 of the original paper: $(\lambda, k_1, k_2, o) = (10, 1\%, 5\%, 5\%)$ and $p_{\min} = 60$ are fixed, and only $p_{\max}$ is tuned as a hyperparameter.

\textbf{Anti-coh (Ours).} Our method computes the scaling factor $r_{\text{abs}} = Q_{\text{abs}}/Q_{p,\text{abs}}$ over the absolute-valued penultimate activation, and regularizes it with the Anti-coherence term $A(\mathbf{z})$, the fraction of the max logit's contribution that is sign-mismatched with the classifier weight. The predicted class $c^*$ used for $A(\mathbf{z})$ is the $\texttt{argmax}$ of the unscaled logit including the bias and the denominators of $r_{\text{abs}}$ and $A(\mathbf{z})$ are clamped at $10^{-8}$ for numerical stability. This yields the regularized scaling factor
\begin{equation}
    r_{\text{anti}} = r_{\text{abs}} - \lambda A(\mathbf{z}).
\end{equation}
The shaped activation is then given by
\begin{equation}
    \phi_{\text{anti}}(\mathbf{x}) = \exp\big(r_{\text{anti}}\big) \cdot \phi(\mathbf{x}),
\end{equation}
which is passed through the classifier head and scored using the energy score,
\begin{equation}
    S(\mathbf{x}) = T \cdot \log \sum_c \exp\!\left(\frac{(\mathbf{W}\phi_{\text{anti}}(\mathbf{x}) + \mathbf{b})_c}{T}\right).
\end{equation}
The percentile $p$ used to compute $r_{\text{abs}}$ and the regularization strength $\lambda$ are jointly tuned as hyperparameters. Although Lemma~\ref{lem:maxlogit_energy} assumes zero bias for the analysis, we include the classifier bias when combining the shaped activation with the energy score for consistency with the compared methods since excluding it did not affect performance. Equivalently $\phi_{\text{anti}}(\mathbf{x}) = \exp(-\lambda A(\mathbf{z})) \cdot \exp(r_{\text{abs}}) \cdot \phi(\mathbf{x})$ so that the regularization acts as a multiplicative damping factor $\exp(-\lambda A(\mathbf{z})) \in (0, 1]$ on the activation scale independently of the magnitude of $r_{\text{abs}}$.

\textbf{Backbones and checkpoints.} All ImageNet backbones are pretrained on ImageNet-1K only. DeiT3-B and BEiTv2-B are loaded from timm as \texttt{deit3\_base\_patch16\_224.fb\_in1k} and \texttt{beitv2\_base\_patch16\_224.in1k\_ft\_in1k} respectively. For CIFAR-10 and CIFAR-100 we use a single checkpoint released with OpenOOD v1.5 and report the results of this checkpoint without standard deviation.

\clearpage

\subsection{Hyperparameter Search and Selection}

Table~\ref{tab:hyperparam_search} lists the hyperparameter search space explored for each method. Table~\ref{tab:hyperparam_selected} and Table~\ref{tab:hyperparam_selected_cifar10} report the hyperparameters selected by validation AUROC on the validation splits provided by OpenOOD v1.5, for the ImageNet and CIFAR-10 ID settings, respectively. The validation splits provided by OpenOOD v1.5 contain both ID and OOD validation samples and no training data is used for hyperparameter selection.

\begin{table}[h]
\centering
\caption{Hyperparameters, search grids, and tuning procedures for all compared methods on the ImageNet experiments.}
\label{tab:hyperparam_search}
\resizebox{\textwidth}{!}{%
\begin{tabular}{lll}
\toprule
Method & Hyperparameter(s) & Search grid \\
\midrule
ReAct & threshold percentile $p$ & $\{0.90, 0.95, 0.99\}$ \\
ASH-S & pruning percentile $p$ & $\{0.65, 0.70, 0.75, 0.80, 0.85, 0.90, 0.95\}$ \\
SCALE & percentile $p$ & $\{0.65, 0.70, 0.75, 0.80, 0.85, 0.90, 0.95\}$ \\
VRA & $(\eta_\alpha, \eta_\beta)$ quantile pair & $\eta_\alpha \in \{0.5, 0.6, 0.65, 0.7\}$, $\eta_\beta \in \{0.80, 0.85, 0.90, 0.95, 0.99\}$ \\
OptFS & number of bins $K$ & $\{1, 3, 5, 10, 25, 50, 100\}$ \\
LTS & percentile $p$ & $\{0.65, 0.70, 0.75, 0.80, 0.85, 0.90, 0.95\}$ \\
LAPS & $(\lambda, m, n)$ jointly & $\lambda \in \{0.5, 1.0, 1.5, 2.0, 2.5, 3.0\}$, $m \in \{0.0, 0.3, 0.6\}$, $n \in \{0.5, 0.6, 0.7, 0.8, 0.9, 1.0\}$ \\
AdaSCALE-A / L & percentile pair $(p_{\min}, p_{\max})$ & $\{(60,65), (60,70), (60,75), (60,80), (60,85), (60,90), (60,95), (60,99)\}$ \\
Anti-coh (Ours) & $(p, \lambda)$ jointly & $p \in \{0.5, 0.55, \dots, 0.95\}$, $\lambda \in \{0, 0.5, 1, 2, 3, 5, 8, 12\}$ \\
\bottomrule
\end{tabular}%
}
\end{table}

\begin{table}[h]
\centering
\caption{Selected hyperparameters for each method across the ten ImageNet backbones, chosen via validation AUROC.}
\label{tab:hyperparam_selected}
\resizebox{\textwidth}{!}{%
\begin{tabular}{lcccccccccc}
\toprule
Method & RN-50 & RN-101 & Swin-B & SwinV2-B & ViT-B/16 & ConvNeXt-B & DeiT3-B & EfficientNet-B1 & MobileNetV2 & BEiTv2-B \\
\midrule
ReAct       & 0.95 & 0.95 & 0.90 & 0.90 & 0.90 & 0.90 & 0.95 & 0.95 & 0.90 & 0.90 \\
ASH-S       & 0.85 & 0.85 & 0.65 & 0.65 & 0.65 & 0.95 & 0.65 & 0.65 & 0.85 & 0.65 \\
SCALE       & 0.85 & 0.85 & 0.95 & 0.65 & 0.65 & 0.95 & 0.90 & 0.65 & 0.85 & 0.65 \\
VRA $(\eta_\alpha,\eta_\beta)$ & (0.7, 0.95) & (0.7, 0.90) & (0.5, 0.85) & (0.5, 0.80) & (0.5, 0.80) & (0.5, 0.80) & (0.7, 0.99) & (0.5, 0.95) & (0.6, 0.80) & (0.5, 0.90) \\
OptFS ($K$) & 100 & 100 & 3 & 100 & 100 & 3 & 10 & 5 & 100 & 50 \\
LTS         & 0.85 & 0.85 & 0.95 & 0.95 & 0.95 & 0.95 & 0.85 & 0.65 & 0.85 & 0.70 \\
LAPS $(\lambda,m,n)$ & (2.0, 0.6, 1.0) & (2.0, 0.6, 1.0) & (1.5, 0.6, 1.0) & (1.5, 0.6, 1.0) & (1.5, 0.0, 1.0) & (1.5, 0.6, 1.0) & (2.0, 0.0, 0.5) & (3.0, 0.0, 0.5) & (0.5, 0.0, 0.9) & (1.5, 0.6, 1.0) \\
AdaSCALE-A  & (60,75) & (60,75) & (60,99) & (60,99) & (60,70) & (60,99) & (60,80) & (60,99) & (60,75) & (60,80) \\
AdaSCALE-L  & (60,70) & (60,70) & (60,99) & (60,99) & (60,70) & (60,99) & (60,95) & (60,99) & (60,70) & (60,80) \\
Anti-coh $(p,\lambda)$ & (0.9, 3) & (0.9, 3) & (0.95, 12) & (0.95, 12) & (0.95, 12) & (0.9, 12) & (0.85, 5) & (0.95, 8) & (0.95, 8) & (0.75, 8) \\
\bottomrule
\end{tabular}%
}
\end{table}

\begin{table}[h]
\centering
\caption{Selected hyperparameters for each method on CIFAR-10 (ResNet-18), chosen via validation AUROC.}
\label{tab:hyperparam_selected_cifar10}
\resizebox{\textwidth}{!}{%
\begin{tabular}{ccccccccccc}
\toprule
ReAct & ASH-S & SCALE & VRA $(\eta_\alpha,\eta_\beta)$ & OptFS ($K$) & LTS & LAPS $(\lambda,m,n)$ & AdaSCALE-A & AdaSCALE-L & Anti-coh $(p,\lambda)$ \\
\midrule
0.99 & 0.65 & 0.65 & (0.5, 0.99) & 1 & 0.65 & (2.0, 0.6, 0.5) & (60, 65) & (60, 65) & (0.5, 12) \\
\bottomrule
\end{tabular}%
}
\end{table}

\clearpage

\section{Full Per-backbone Results}
\label{appendix:full_results}

Table~\ref{tab:cifar100_ablation} reports results on CIFAR-100 under the OpenOOD setting where most methods attain noticeably lower performance than on CIFAR-10. We suspect this stems from the limited number of training images per class which prevents the pretrained model from learning sufficiently discriminative penultimate activations. In the CIFAR-10 setting (Table~\ref{tab:cifar10_main_exp}) detection performance on MNIST, which we consider the simplest OOD dataset, is nearly saturated. On CIFAR-100, however, MNIST detection performance is substantially lower across every method despite CIFAR-100 not containing any digit-related classes. We take this as suggestive evidence that the activation itself fails to carry adequate feature information rather than any inherent difficulty in distinguishing MNIST from a specific ID class. Additionally, Tables~\ref{tab:resnet50_ablation} through~\ref{tab:beitv2b_ablation} report per-backbone results on each OOD dataset under the ImageNet ID setting.

\begin{table}[htbp]
\centering
\caption{OOD detection performance (AUROC/FPR@95, \%) on CIFAR-100 (ResNet-18) under the OpenOOD v1.5 protocol. Methods are grouped by resource requirement. The best value in each column is highlighted in \textbf{bold}.}
\label{tab:cifar100_ablation}
\resizebox{\textwidth}{!}{%
\begin{tabular}{lccccccc}
\toprule
Method & MNIST & Places365 & SVHN & Texture & TIN & CIFAR-10 & Avg \\
\midrule
\multicolumn{8}{c}{\textit{Training-data-dependent}} \\
\midrule
ReAct       & 77.47/84.48 & 77.51/80.66 & 82.48/76.10 & 80.55/77.57 & 80.63/75.02 & 78.71/79.76 & 79.56/78.93 \\
VRA         & 77.95/83.72 & 77.68/80.41 & 83.10/74.37 & 81.23/76.70 & 80.77/74.91 & 78.78/79.64 & 79.92/78.29 \\
OptFS       & 76.86/86.46 & 77.39/80.41 & 81.79/77.36 & 80.20/78.51 & 80.49/74.92 & 78.49/79.58 & 79.20/79.54 \\
LAPS        & 75.73/85.34 & 77.92/81.22 & 81.82/77.20 & 81.67/77.00 & 80.76/75.49 & 78.49/80.46 & 79.40/79.45 \\
\midrule
\multicolumn{8}{c}{\textit{Gradient-dependent}} \\
\midrule
AdaSCALE-A  & 79.96/82.36 & 78.21/78.74 & 85.53/68.10 & 81.23/74.63 & 80.52/75.61 & \textbf{79.28}/79.19 & 80.79/76.44 \\
AdaSCALE-L  & \textbf{81.76}/77.96 & \textbf{78.57}/78.05 & \textbf{88.09}/\textbf{58.16} & \textbf{82.56}/70.20 & 79.93/76.36 & 78.92/79.48 & \textbf{81.64}/\textbf{73.37} \\
\midrule
\multicolumn{8}{c}{\textit{Training-free, gradient-free}} \\
\midrule
MSP         & 74.45/88.53 & 77.17/80.24 & 79.37/79.29 & 78.07/79.61 & 80.11/75.54 & 78.54/79.97 & 77.95/80.53 \\
MaxLogit    & 77.82/86.21 & 77.36/80.20 & 82.33/77.22 & 79.36/78.72 & 80.69/74.71 & 79.20/79.40 & 79.46/79.41 \\
Energy      & 78.22/84.45 & 77.12/80.51 & 82.67/76.03 & 79.33/77.91 & 80.51/74.91 & 79.01/79.66 & 79.48/78.91 \\
ASH-S       & 80.08/80.06 & 78.12/78.68 & 85.84/66.58 & 81.79/72.80 & 80.23/75.29 & 79.17/79.34 & 80.87/75.46 \\
SCALE       & 79.62/81.63 & 78.09/78.35 & 85.14/68.94 & 81.28/74.29 & 80.47/74.66 & 79.24/\textbf{79.16} & 80.64/76.17 \\
LTS         & 80.91/\textbf{77.67} & 78.30/\textbf{77.79} & 87.14/61.76 & 82.49/\textbf{69.95} & 79.87/75.27 & 78.85/79.46 & 81.26/73.65 \\
Anti-coh (Ours) & 77.34/86.86 & 77.42/80.85 & 80.39/79.05 & 80.42/77.71 & \textbf{81.61}/\textbf{73.84} & 79.12/80.28 & 79.38/79.77 \\
\bottomrule
\end{tabular}%
}
\end{table}

\begin{table}[htbp]
\centering
\caption{OOD detection performance (AUROC/FPR@95, \%) on ResNet-50 (ImageNet) under the OpenOOD v1.5 protocol. Methods are grouped by resource requirement. The best value in each column is highlighted in \textbf{bold}.}
\label{tab:resnet50_ablation}
\resizebox{\textwidth}{!}{%
\begin{tabular}{lcccccc}
\toprule
Method & iNaturalist & NINCO & OpenImage & SSB-Hard & Texture & Avg \\
\midrule
\multicolumn{7}{c}{\textit{Training-data-dependent}} \\
\midrule
ReAct       & 96.32/19.96 & 81.71/70.43 & 91.88/41.33 & 73.08/80.47 & 91.62/41.44 & 86.92/50.72 \\
VRA         & 96.79/17.48 & 82.23/70.28 & 93.08/36.66 & 71.30/83.94 & 95.05/24.29 & 87.69/46.53 \\
OptFS       & 96.62/18.43 & 82.45/70.48 & 92.55/39.26 & 71.11/83.79 & 95.47/24.91 & 87.64/47.38 \\
LAPS        & 95.09/27.80 & 80.80/72.42 & 91.94/42.26 & 70.96/83.22 & 92.89/35.05 & 86.34/52.15 \\
\midrule
\multicolumn{7}{c}{\textit{Gradient-dependent}} \\
\midrule
AdaSCALE-A  & 95.26/27.99 & 84.13/66.52 & 94.04/33.51 & 77.96/77.86 & 92.84/32.41 & 88.85/47.66 \\
AdaSCALE-L  & 96.48/19.78 & 85.00/63.70 & \textbf{94.70}/29.39 & \textbf{78.16}/\textbf{77.19} & 94.69/23.90 & 89.81/42.79 \\
\midrule
\multicolumn{7}{c}{\textit{Training-free, gradient-free}} \\
\midrule
MSP         & 88.43/52.73 & 79.98/75.94 & 84.99/63.61 & 72.16/84.49 & 80.47/66.15 & 81.21/68.59 \\
MaxLogit    & 91.15/50.82 & 80.41/76.61 & 89.26/57.13 & 72.75/83.70 & 86.41/54.20 & 84.00/64.49 \\
Energy      & 90.61/53.83 & 79.70/77.56 & 89.15/57.09 & 72.35/83.82 & 86.74/52.23 & 83.71/64.91 \\
ASH-S       & 97.70/12.33 & 84.42/63.51 & 93.83/30.66 & 74.59/80.89 & \textbf{97.11}/\textbf{13.60} & 89.53/40.20 \\
SCALE       & 98.00/10.52 & 85.27/61.79 & 93.97/29.92 & 77.23/78.00 & 96.73/14.65 & 90.24/38.97 \\
LTS         & 97.85/11.23 & 85.23/61.77 & 94.03/30.12 & 77.04/78.08 & 96.43/15.67 & 90.12/39.38 \\
Anti-coh (Ours) & \textbf{98.08}/\textbf{10.04} & \textbf{86.44}/\textbf{60.77} & 94.46/\textbf{29.33} & 77.64/77.86 & 96.61/15.27 & \textbf{90.65}/\textbf{38.65} \\
\bottomrule
\end{tabular}%
}
\end{table}

\begin{table}[htbp]
\centering
\caption{OOD detection performance (AUROC/FPR@95, \%) on ResNet-101 (ImageNet) under the OpenOOD v1.5 protocol. Methods are grouped by resource requirement. The best value in each column is highlighted in \textbf{bold}.}
\label{tab:resnet101_ablation}
\resizebox{\textwidth}{!}{%
\begin{tabular}{lcccccc}
\toprule
Method & iNaturalist & NINCO & OpenImage & SSB-Hard & Texture & Avg \\
\midrule
\multicolumn{7}{c}{\textit{Training-data-dependent}} \\
\midrule
ReAct       & 95.62/23.05 & 83.27/68.07 & 92.56/39.51 & 74.41/78.27 & 92.40/36.99 & 87.65/49.18 \\
VRA         & 96.54/18.28 & 82.90/67.37 & 93.65/33.35 & 71.45/82.65 & 96.10/19.49 & 88.13/44.23 \\
OptFS       & 95.67/24.19 & 83.28/68.71 & 92.97/37.36 & 72.31/82.22 & 95.78/23.78 & 88.00/47.25 \\
LAPS        & 94.69/29.03 & 82.80/69.39 & 92.55/39.67 & 73.11/80.59 & 93.67/31.10 & 87.37/49.96 \\
\midrule
\multicolumn{7}{c}{\textit{Gradient-dependent}} \\
\midrule
AdaSCALE-A  & 93.86/34.17 & 85.09/62.45 & 94.43/31.51 & 78.50/75.78 & 93.37/30.02 & 89.05/46.79 \\
AdaSCALE-L  & 95.19/26.85 & 86.00/60.45 & \textbf{95.14}/27.51 & \textbf{78.70}/\textbf{75.57} & 95.14/22.06 & 90.03/42.49 \\
\midrule
\multicolumn{7}{c}{\textit{Training-free, gradient-free}} \\
\midrule
MSP         & 86.24/57.02 & 80.66/74.55 & 84.91/62.62 & 72.61/83.66 & 81.51/62.57 & 81.19/68.08 \\
MaxLogit    & 88.78/55.81 & 82.31/72.49 & 89.36/55.07 & 74.60/80.77 & 87.57/48.30 & 84.52/62.49 \\
Energy      & 88.23/58.34 & 81.81/73.43 & 89.24/55.57 & 74.38/80.58 & 87.87/47.34 & 84.31/63.05 \\
ASH-S       & 96.65/18.16 & 85.30/61.04 & 94.36/28.09 & 74.55/79.99 & \textbf{97.55}/\textbf{11.90} & 89.68/39.83 \\
SCALE       & 97.01/15.69 & 86.12/59.63 & 94.50/27.99 & 77.06/76.95 & 97.17/13.39 & 90.37/38.73 \\
LTS         & 96.79/16.71 & 86.18/59.19 & 94.54/27.97 & 77.24/76.54 & 96.94/14.15 & 90.34/38.91 \\
Anti-coh (Ours) & \textbf{97.20}/\textbf{14.62} & \textbf{86.99}/\textbf{59.14} & 95.04/\textbf{26.90} & 76.90/77.53 & 96.98/14.13 & \textbf{90.62}/\textbf{38.46} \\
\bottomrule
\end{tabular}%
}
\end{table}

\begin{table}[htbp]
\centering
\caption{OOD detection performance (AUROC/FPR@95, \%) on Swin-B (ImageNet) under the OpenOOD v1.5 protocol. Methods are grouped by resource requirement. The best value in each column is highlighted in \textbf{bold}.}
\label{tab:swinb_ablation}
\resizebox{\textwidth}{!}{%
\begin{tabular}{lcccccc}
\toprule
Method & iNaturalist & NINCO & OpenImage & SSB-Hard & Texture & Avg \\
\midrule
\multicolumn{7}{c}{\textit{Training-data-dependent}} \\
\midrule
ReAct       & 88.60/58.96 & 80.24/75.16 & 85.83/62.41 & 68.61/84.90 & 84.05/58.87 & 81.47/68.06 \\
VRA         & 81.11/76.37 & 68.84/87.31 & 81.43/72.95 & 60.34/88.61 & 67.28/89.77 & 71.80/83.00 \\
OptFS       & 90.82/49.66 & 80.56/72.90 & 90.08/50.97 & 68.90/85.30 & 85.22/58.90 & 83.12/63.55 \\
LAPS        & 90.54/60.50 & \textbf{80.60}/77.78 & 90.32/54.17 & 68.47/86.88 & \textbf{86.15}/58.56 & 83.22/67.58 \\
\midrule
\multicolumn{7}{c}{\textit{Gradient-dependent}} \\
\midrule
AdaSCALE-A  & 86.10/54.83 & 79.23/\textbf{70.53} & 82.94/62.14 & 69.35/82.26 & 77.68/61.15 & 79.06/66.18 \\
AdaSCALE-L  & 85.14/61.89 & 79.05/71.96 & 82.33/67.01 & 69.29/83.19 & 76.74/63.58 & 78.51/69.53 \\
\midrule
\multicolumn{7}{c}{\textit{Training-free, gradient-free}} \\
\midrule
MSP         & 86.46/50.79 & 78.42/71.66 & 81.76/62.34 & \textbf{71.30}/\textbf{80.20} & 78.72/65.37 & 79.33/66.07 \\
MaxLogit    & 78.89/55.39 & 71.87/72.58 & 70.92/66.17 & 65.22/81.38 & 73.87/62.27 & 72.15/67.56 \\
Energy      & 67.79/73.19 & 64.28/79.82 & 60.40/76.75 & 59.27/86.00 & 69.28/66.29 & 64.20/76.41 \\
ASH-S       & 15.22/99.69 & 26.43/99.06 & 16.83/99.40 & 35.54/98.21 & 24.15/98.33 & 23.63/98.94 \\
SCALE       & 63.10/91.18 & 62.51/89.69 & 66.71/84.48 & 55.97/92.86 & 69.60/79.27 & 63.58/87.50 \\
LTS         & 84.82/52.99 & 75.66/72.54 & 79.49/63.65 & 65.41/82.51 & 78.18/62.94 & 76.71/66.93 \\
Anti-coh (Ours) & \textbf{91.66}/\textbf{45.54} & 80.24/71.74 & \textbf{90.71}/\textbf{48.42} & 67.72/85.20 & 86.14/\textbf{56.05} & \textbf{83.29}/\textbf{61.39} \\
\bottomrule
\end{tabular}%
}
\end{table}

\begin{table}[htbp]
\centering
\caption{OOD detection performance (AUROC/FPR@95, \%) on SwinV2-B (ImageNet) under the OpenOOD v1.5 protocol. Methods are grouped by resource requirement. The best value in each column is highlighted in \textbf{bold}.}
\label{tab:swinv2b_ablation}
\resizebox{\textwidth}{!}{%
\begin{tabular}{lcccccc}
\toprule
Method & iNaturalist & NINCO & OpenImage & SSB-Hard & Texture & Avg \\
\midrule
\multicolumn{7}{c}{\textit{Training-data-dependent}} \\
\midrule
ReAct       & 92.52/44.05 & \textbf{82.85}/68.88 & 90.33/49.21 & 72.77/81.37 & 85.87/53.87 & 84.87/59.48 \\
VRA         & 87.12/75.12 & 73.58/88.62 & 85.96/74.17 & 65.34/90.31 & 75.44/88.14 & 77.49/83.27 \\
OptFS       & 91.93/46.20 & 81.43/74.33 & 90.86/48.99 & 70.41/85.27 & 86.01/55.53 & 84.13/62.06 \\
LAPS        & 92.66/48.97 & 82.52/74.97 & 91.61/49.04 & 71.46/85.86 & \textbf{87.38}/\textbf{51.13} & \textbf{85.13}/61.99 \\
\midrule
\multicolumn{7}{c}{\textit{Gradient-dependent}} \\
\midrule
AdaSCALE-A  & 88.81/45.58 & 80.03/\textbf{66.43} & 85.61/54.45 & 72.85/78.39 & 78.45/57.34 & 81.15/60.44 \\
AdaSCALE-L  & 87.42/51.84 & 79.06/67.25 & 84.30/58.10 & 72.93/\textbf{77.65} & 76.99/58.94 & 80.14/62.76 \\
\midrule
\multicolumn{7}{c}{\textit{Training-free, gradient-free}} \\
\midrule
MSP         & 89.00/46.42 & 80.95/68.94 & 84.65/57.78 & \textbf{73.30}/78.81 & 80.83/62.54 & 81.75/62.90 \\
MaxLogit    & 85.44/46.96 & 77.31/68.07 & 78.76/57.84 & 70.16/78.30 & 78.91/58.26 & 78.12/61.89 \\
Energy      & 77.87/62.67 & 72.14/72.88 & 71.44/67.08 & 66.41/81.83 & 76.25/59.72 & 72.82/68.84 \\
ASH-S       & 12.06/99.73 & 23.61/98.55 & 13.75/99.63 & 30.08/98.61 & 22.58/99.11 & 20.41/99.13 \\
SCALE       & 50.23/90.69 & 49.15/91.66 & 48.04/89.70 & 44.58/95.98 & 64.20/78.99 & 51.24/89.40 \\
LTS         & 90.87/37.50 & 77.90/66.91 & 84.06/54.90 & 68.17/79.26 & 77.29/64.98 & 79.66/60.71 \\
Anti-coh (Ours) & \textbf{93.83}/\textbf{33.74} & 82.39/68.95 & \textbf{92.01}/\textbf{42.43} & 70.13/82.97 & 86.92/51.90 & 85.06/\textbf{56.00} \\
\bottomrule
\end{tabular}%
}
\end{table}

\begin{table}[htbp]
\centering
\caption{OOD detection performance (AUROC/FPR@95, \%) on ViT-B/16 (ImageNet) under the OpenOOD v1.5 protocol. Methods are grouped by resource requirement. The best value in each column is highlighted in \textbf{bold}.}
\label{tab:vitb_ablation}
\resizebox{\textwidth}{!}{%
\begin{tabular}{lcccccc}
\toprule
Method & iNaturalist & NINCO & OpenImage & SSB-Hard & Texture & Avg \\
\midrule
\multicolumn{7}{c}{\textit{Training-data-dependent}} \\
\midrule
ReAct       & 86.00/65.22 & 75.47/78.60 & 84.23/64.88 & 63.19/88.30 & 84.67/57.04 & 78.71/70.81 \\
VRA         & 17.86/98.99 & 42.69/92.89 & 42.53/90.37 & 37.52/96.73 & 66.39/73.21 & 41.40/90.44 \\
OptFS       & 89.92/56.08 & 80.51/74.63 & 88.22/60.09 & 66.18/87.22 & 86.44/56.86 & 82.25/66.98 \\
LAPS        & 87.21/68.73 & 78.06/78.46 & 86.48/64.77 & 63.63/88.88 & 85.86/56.67 & 80.25/71.50 \\
\midrule
\multicolumn{7}{c}{\textit{Gradient-dependent}} \\
\midrule
AdaSCALE-A  & 84.77/59.27 & 76.48/73.80 & 83.45/60.77 & 68.03/85.81 & 80.38/60.53 & 78.62/68.03 \\
AdaSCALE-L  & 79.84/74.55 & 75.30/76.56 & 80.76/67.06 & 66.87/87.34 & 76.56/66.47 & 75.87/74.40 \\
\midrule
\multicolumn{7}{c}{\textit{Training-free, gradient-free}} \\
\midrule
MSP         & 88.16/51.61 & 78.08/73.32 & 84.85/59.96 & \textbf{69.03}/\textbf{84.77} & 83.00/60.27 & 80.62/65.99 \\
MaxLogit    & 85.25/52.38 & 72.39/74.21 & 81.61/58.82 & 64.45/85.72 & 81.70/56.86 & 77.08/65.60 \\
Energy      & 79.26/64.06 & 66.05/78.45 & 76.49/65.05 & 59.16/88.16 & 79.32/58.46 & 72.05/70.83 \\
ASH-S       & 9.75/99.96 & 22.87/99.42 & 18.19/99.60 & 34.41/97.91 & 29.21/98.48 & 22.89/99.07 \\
SCALE       & 73.62/69.13 & 61.34/79.94 & 72.51/67.33 & 56.78/88.75 & 77.31/59.36 & 68.31/72.90 \\
LTS         & 88.89/\textbf{46.28} & 75.46/72.97 & 83.44/59.99 & 64.36/85.13 & 84.59/52.80 & 79.35/63.43 \\
Anti-coh (Ours) & \textbf{91.22}/49.08 & \textbf{80.99}/\textbf{72.83} & \textbf{89.13}/\textbf{56.73} & 65.10/87.06 & \textbf{88.16}/\textbf{50.20} & \textbf{82.92}/\textbf{63.18} \\
\bottomrule
\end{tabular}%
}
\end{table}

\begin{table}[htbp]
\centering
\caption{OOD detection performance (AUROC/FPR@95, \%) on ConvNeXt-B (ImageNet) under the OpenOOD v1.5 protocol. Methods are grouped by resource requirement. The best value in each column is highlighted in \textbf{bold}.}
\label{tab:convnextb_ablation}
\resizebox{\textwidth}{!}{%
\begin{tabular}{lcccccc}
\toprule
Method & iNaturalist & NINCO & OpenImage & SSB-Hard & Texture & Avg \\
\midrule
\multicolumn{7}{c}{\textit{Training-data-dependent}} \\
\midrule
ReAct       & 87.73/68.54 & 81.32/75.28 & 82.92/73.48 & 73.79/84.36 & 74.73/73.09 & 80.10/74.95 \\
VRA         & 74.22/78.71 & 68.00/85.20 & 75.69/74.51 & 55.73/91.36 & 71.83/74.68 & 69.10/80.89 \\
OptFS       & 92.81/41.73 & 84.04/68.48 & 91.07/48.80 & 73.90/82.44 & 85.66/55.59 & 85.50/59.41 \\
LAPS        & 91.67/60.68 & 83.30/75.25 & 90.35/59.19 & 73.45/87.89 & 85.03/58.10 & 84.76/68.22 \\
\midrule
\multicolumn{7}{c}{\textit{Gradient-dependent}} \\
\midrule
AdaSCALE-A  & 91.44/49.10 & 84.12/65.14 & 88.61/60.31 & 75.29/\textbf{75.84} & 78.80/68.63 & 83.65/63.81 \\
AdaSCALE-L  & 88.84/71.72 & 83.41/70.74 & 87.89/70.22 & 74.42/79.19 & 78.52/75.04 & 82.61/73.38 \\
\midrule
\multicolumn{7}{c}{\textit{Training-free, gradient-free}} \\
\midrule
MSP         & 89.51/44.25 & 81.07/66.49 & 83.15/59.26 & \textbf{75.38}/76.11 & 72.53/67.62 & 80.33/62.75 \\
MaxLogit    & 82.68/57.41 & 76.58/69.39 & 73.50/68.81 & 72.90/76.86 & 63.33/73.39 & 73.80/69.17 \\
Energy      & 55.97/90.24 & 64.43/82.68 & 52.95/89.38 & 65.67/84.36 & 49.12/88.40 & 57.63/87.01 \\
ASH-S       & 4.49/99.76 & 29.22/97.16 & 22.87/97.61 & 33.64/97.85 & 42.78/92.87 & 26.60/97.05 \\
SCALE       & 57.62/98.73 & 67.77/90.66 & 65.35/92.05 & 61.35/92.97 & 69.53/86.10 & 64.32/92.10 \\
LTS         & 91.64/38.85 & 82.52/\textbf{62.25} & 83.71/60.16 & 73.04/77.31 & 70.96/71.56 & 80.37/62.02 \\
Anti-coh (Ours) & \textbf{93.66}/\textbf{37.43} & \textbf{84.56}/68.10 & \textbf{91.98}/\textbf{46.65} & 73.87/82.23 & \textbf{87.53}/\textbf{52.25} & \textbf{86.32}/\textbf{57.33} \\
\bottomrule
\end{tabular}%
}
\end{table}

\begin{table}[htbp]
\centering
\caption{OOD detection performance (AUROC/FPR@95, \%) on DeiT3-B (ImageNet) under the OpenOOD v1.5 protocol. Methods are grouped by resource requirement. The best value in each column is highlighted in \textbf{bold}.}
\label{tab:deit3b_ablation}
\resizebox{\textwidth}{!}{%
\begin{tabular}{lcccccc}
\toprule
Method & iNaturalist & NINCO & OpenImage & SSB-Hard & Texture & Avg \\
\midrule
\multicolumn{7}{c}{\textit{Training-data-dependent}} \\
\midrule
ReAct       & 79.41/80.53 & 74.77/84.62 & 76.89/81.40 & 63.84/90.18 & 78.73/74.20 & 74.73/82.19 \\
VRA         & 33.04/97.80 & 45.76/91.94 & 42.29/93.09 & 46.84/94.63 & 38.88/96.84 & 41.36/94.86 \\
OptFS       & 91.61/43.64 & 82.07/70.70 & 89.09/53.03 & 69.01/84.26 & 84.77/57.78 & 83.31/61.89 \\
LAPS        & 92.06/50.68 & \textbf{82.40}/73.48 & \textbf{89.46}/59.56 & 69.45/87.38 & 85.66/61.83 & \textbf{83.81}/66.59 \\
\midrule
\multicolumn{7}{c}{\textit{Gradient-dependent}} \\
\midrule
AdaSCALE-A  & 90.03/45.91 & 80.81/70.48 & 88.27/52.23 & 72.07/81.86 & 82.74/54.80 & 82.78/61.06 \\
AdaSCALE-L  & 89.78/49.95 & 80.04/74.80 & 87.09/63.58 & \textbf{72.45}/83.53 & 81.81/55.69 & 82.23/65.51 \\
\midrule
\multicolumn{7}{c}{\textit{Training-free, gradient-free}} \\
\midrule
MSP         & 91.24/41.57 & 81.33/70.18 & 87.19/54.03 & 72.37/\textbf{80.68} & 82.91/58.78 & 83.01/61.05 \\
MaxLogit    & 90.02/43.84 & 79.08/71.96 & 85.04/56.68 & 71.00/82.04 & 82.29/55.99 & 81.49/62.10 \\
Energy      & 84.76/67.37 & 74.91/82.65 & 79.63/74.06 & 68.70/87.96 & 80.56/63.05 & 77.71/75.02 \\
ASH-S       & 53.54/87.52 & 52.45/91.14 & 52.96/88.84 & 56.72/92.23 & 45.63/93.62 & 52.26/90.67 \\
SCALE       & 87.45/53.05 & 77.59/75.48 & 85.27/59.08 & 66.36/86.23 & \textbf{85.77}/51.60 & 80.49/65.09 \\
LTS         & 90.71/41.36 & 79.40/71.35 & 85.71/54.88 & 71.00/81.93 & 83.41/54.22 & 82.04/60.75 \\
Anti-coh (Ours) & \textbf{92.34}/\textbf{35.53} & 80.43/\textbf{69.79} & 88.01/\textbf{50.52} & 71.09/81.49 & 85.35/\textbf{48.74} & 83.44/\textbf{57.21} \\
\bottomrule
\end{tabular}%
}
\end{table}

\begin{table}[htbp]
\centering
\caption{OOD detection performance (AUROC/FPR@95, \%) on EfficientNet-B1 (ImageNet) under the OpenOOD v1.5 protocol. Methods are grouped by resource requirement. The best value in each column is highlighted in \textbf{bold}.}
\label{tab:efficientnetb1_ablation}
\resizebox{\textwidth}{!}{%
\begin{tabular}{lcccccc}
\toprule
Method & iNaturalist & NINCO & OpenImage & SSB-Hard & Texture & Avg \\
\midrule
\multicolumn{7}{c}{\textit{Training-data-dependent}} \\
\midrule
ReAct       & 85.19/82.48 & 73.82/86.66 & 84.57/79.68 & 61.54/91.80 & 83.41/86.19 & 77.71/85.36 \\
VRA         & 84.33/74.80 & 72.38/81.08 & 84.97/68.08 & 61.32/89.98 & 88.64/59.29 & 78.33/74.65 \\
OptFS       & \textbf{91.37}/\textbf{44.46} & \textbf{79.01}/74.79 & \textbf{89.66}/50.99 & 65.78/87.45 & \textbf{92.30}/35.62 & \textbf{83.62}/58.66 \\
LAPS        & 86.54/69.70 & 74.68/82.19 & 85.95/67.41 & 63.96/89.22 & 85.91/63.97 & 79.41/74.50 \\
\midrule
\multicolumn{7}{c}{\textit{Gradient-dependent}} \\
\midrule
AdaSCALE-A  & 83.37/59.15 & 74.24/75.77 & 84.67/59.07 & \textbf{71.45}/\textbf{81.55} & 79.20/65.66 & 78.59/68.24 \\
AdaSCALE-L  & 77.65/79.36 & 69.77/85.23 & 81.70/72.57 & 69.58/85.68 & 77.47/76.35 & 75.23/79.84 \\
\midrule
\multicolumn{7}{c}{\textit{Training-free, gradient-free}} \\
\midrule
MSP         & 85.86/55.77 & 78.60/\textbf{73.78} & 83.35/62.40 & 70.71/82.93 & 78.82/66.67 & 79.47/68.31 \\
MaxLogit    & 80.91/63.73 & 73.28/76.51 & 78.82/67.34 & 67.58/83.28 & 71.31/72.96 & 74.38/72.76 \\
Energy      & 70.54/87.57 & 66.12/87.77 & 70.99/83.52 & 63.51/88.10 & 61.84/89.45 & 66.60/87.28 \\
ASH-S       & 45.07/99.83 & 41.72/99.69 & 46.01/99.08 & 47.91/97.93 & 58.99/96.44 & 47.94/98.59 \\
SCALE       & 49.45/99.59 & 43.47/99.52 & 49.71/98.48 & 48.47/97.87 & 62.85/93.87 & 50.79/97.87 \\
LTS         & 68.76/85.75 & 58.82/90.88 & 67.44/83.51 & 58.54/90.55 & 68.06/83.21 & 64.32/86.78 \\
Anti-coh (Ours) & 90.11/45.42 & 78.51/75.08 & 89.33/\textbf{48.16} & 62.83/88.19 & 91.84/\textbf{30.55} & 82.52/\textbf{57.48} \\
\bottomrule
\end{tabular}%
}
\end{table}

\begin{table}[htbp]
\centering
\caption{OOD detection performance (AUROC/FPR@95, \%) on MobileNetV2 (ImageNet) under the OpenOOD v1.5 protocol. Methods are grouped by resource requirement. The best value in each column is highlighted in \textbf{bold}.}
\label{tab:mobilenetv2_ablation}
\resizebox{\textwidth}{!}{%
\begin{tabular}{lcccccc}
\toprule
Method & iNaturalist & NINCO & OpenImage & SSB-Hard & Texture & Avg \\
\midrule
\multicolumn{7}{c}{\textit{Training-data-dependent}} \\
\midrule
ReAct       & 92.43/45.20 & 77.46/78.96 & 89.58/55.08 & 66.43/87.01 & 90.88/41.03 & 83.36/61.45 \\
VRA         & 92.20/45.18 & 77.57/77.27 & 90.90/47.96 & 64.78/87.52 & 94.59/24.27 & 84.01/56.44 \\
OptFS       & 91.78/45.68 & 78.07/79.28 & 89.86/54.54 & 66.74/87.06 & 93.45/33.97 & 83.98/60.11 \\
LAPS        & \textbf{96.07}/\textbf{22.03} & 78.03/70.43 & 90.26/41.01 & \textbf{73.52}/\textbf{74.66} & 94.13/28.60 & 86.40/47.35 \\
\midrule
\multicolumn{7}{c}{\textit{Gradient-dependent}} \\
\midrule
AdaSCALE-A  & 90.89/50.45 & 77.65/77.29 & 91.26/47.43 & 70.99/84.40 & 91.63/37.80 & 84.48/59.47 \\
AdaSCALE-L  & 92.23/43.87 & 78.68/74.63 & 92.29/40.68 & 71.19/83.17 & 94.09/26.63 & 85.70/53.80 \\
\midrule
\multicolumn{7}{c}{\textit{Training-free, gradient-free}} \\
\midrule
MSP         & 86.74/59.55 & 77.96/81.69 & 82.07/72.69 & 69.27/87.94 & 78.97/70.89 & 79.00/74.55 \\
MaxLogit    & 90.68/53.82 & 77.00/81.42 & 87.11/65.24 & 67.14/87.94 & 86.12/57.30 & 81.61/69.15 \\
Energy      & 90.37/55.18 & 76.12/81.80 & 87.08/64.53 & 66.45/88.24 & 86.61/54.24 & 81.33/68.80 \\
ASH-S       & 92.63/38.67 & 78.83/73.05 & 91.30/42.03 & 68.14/84.34 & 96.43/15.60 & 85.47/50.74 \\
SCALE       & 94.43/30.29 & 79.97/70.98 & 92.15/38.49 & 70.11/82.76 & 96.66/14.63 & 86.66/47.43 \\
LTS         & 94.41/30.51 & 80.02/71.54 & 92.14/38.96 & 70.04/82.96 & 96.59/14.61 & 86.64/47.72 \\
Anti-coh (Ours) & 94.98/27.66& \textbf{80.82}/\textbf{69.87} & \textbf{92.70}/\textbf{36.83} & 71.00/81.63 & \textbf{96.87}/\textbf{14.52} & \textbf{87.27}/\textbf{46.10} \\
\bottomrule
\end{tabular}%
}
\end{table}

\begin{table}[htbp]
\centering
\caption{OOD detection performance (AUROC/FPR@95, \%) on BEiTv2-B (ImageNet) under the OpenOOD v1.5 protocol. Methods are grouped by resource requirement. The best value in each column is highlighted in \textbf{bold}.}
\label{tab:beitv2b_ablation}
\resizebox{\textwidth}{!}{%
\begin{tabular}{lcccccc}
\toprule
Method & iNaturalist & NINCO & OpenImage & SSB-Hard & Texture & Avg \\
\midrule
\multicolumn{7}{c}{\textit{Training-data-dependent}} \\
\midrule
ReAct       & 93.53/32.77 & 82.07/67.85 & 90.64/43.59 & 70.36/81.80 & 88.09/46.44 & 84.94/54.49 \\
VRA         & 28.30/98.05 & 38.13/97.45 & 35.32/96.40 & 41.23/95.97 & 51.79/92.48 & 38.95/96.07 \\
OptFS       & 94.19/32.36 & \textbf{83.79}/68.20 & 92.68/38.49 & 69.98/83.04 & 88.41/47.52 & 85.81/53.92 \\
LAPS        & 94.23/35.21 & 83.60/70.64 & 92.66/40.87 & 70.21/83.63 & \textbf{89.52}/43.51 & \textbf{86.04}/54.77 \\
\midrule
\multicolumn{7}{c}{\textit{Gradient-dependent}} \\
\midrule
AdaSCALE-A  & 90.45/37.23 & 82.15/\textbf{62.47} & 89.87/42.70 & \textbf{75.34}/\textbf{75.79} & 80.74/51.81 & 83.71/54.00 \\
AdaSCALE-L  & 88.02/45.18 & 80.92/64.73 & 89.03/46.60 & 74.85/76.13 & 78.23/55.53 & 82.21/57.63 \\
\midrule
\multicolumn{7}{c}{\textit{Training-free, gradient-free}} \\
\midrule
MSP         & 90.80/38.29 & 81.93/65.46 & 87.98/48.13 & 73.32/77.89 & 85.03/52.32 & 83.81/56.42 \\
MaxLogit    & 89.48/35.20 & 78.40/63.73 & 85.04/45.69 & 70.57/77.37 & 84.06/47.85 & 81.51/53.97 \\
Energy      & 86.88/42.58 & 74.76/69.28 & 81.68/52.76 & 67.64/80.40 & 82.65/50.12 & 78.72/59.03 \\
ASH-S       & 9.93/99.86 & 22.89/99.35 & 12.17/99.61 & 35.01/97.81 & 20.63/99.10 & 20.13/99.15 \\
SCALE       & 72.30/66.23 & 62.86/79.94 & 64.38/74.96 & 60.79/84.78 & 73.88/63.85 & 66.84/73.95 \\
LTS         & 87.08/39.80 & 74.46/67.20 & 80.72/52.57 & 67.32/79.26 & 81.57/50.34 & 78.23/57.83 \\
Anti-coh (Ours) & \textbf{94.36}/\textbf{30.18} & 83.52/66.74 & \textbf{93.06}/\textbf{37.11} & 69.61/82.08 & 89.43/\textbf{43.00} & 86.00/\textbf{51.82} \\
\bottomrule
\end{tabular}%
}
\end{table}

\pagebreak
\section{Limitation}
\label{appendix:limitation}
Despite requiring no training data, our method still trails the strongest representation-based methods on backbones with negative penultimate activations, as discussed in Appendix~\ref{appendix:Rep_method}. Relying solely on the activation and classifier weight of a single test sample, our method cannot exploit the class-conditional statistics that representation-based methods estimate from the training set, and this may impose a fundamental ceiling on how far a training-free method can close this gap. Moreover, our method underperforms several baselines on CIFAR-100 where most methods show a substantial drop in absolute performance relative to CIFAR-10. We suspect that the limited number of training images per class prevents the pretrained model from learning sufficiently discriminative features and thereby limits the reliability of the evaluation on this dataset (see Appendix~\ref{appendix:full_results}). In addition, our method shows a lower performance on SSB-Hard, and addressing this on fine-grained near-OOD datasets is a promising direction for future work.
\end{document}